\theoremstyle:=definition,remark,plain\do{
        \expandafter\g@addto@macro\csname th@\theoremstyle\endcsname{
            \addtolength\thm@preskip\parskip
            }
        }
\newtheorem{remark}{Remark}[section]
\newtheorem{assumption}{Assumption}[section]
\newtheorem{lemma}{Lemma}
\newtheorem{definition}{Definition}[section]
\newtheorem{theorem}{Theorem}
\theoremstyle{remark}
\newtheorem{example}{Example}[section]
\newcommand{\calF}{\mathcal{F}}
\newcommand{\calX}{\mathcal{X}}
\newcommand{\calV}{\mathcal{V}}
\newcommand{\calR}{\mathcal{R}}
\newcommand{\calRup}{\bar{\calR}}
\newcommand{\bbR}{\mathbb{R}}
\newcommand{\bbE}{\mathbb{E}}
\newcommand{\bbP}{\mathbb{P}}
\newcommand{\vvta}[1]{\Vert #1 \Vert}
\newcommand{\lr}[1]{\left( #1 \right)}
\newcommand{\lrr}[1]{\left[ #1 \right]}
\newcommand{\lrrr}[1]{\left\{ #1 \right\}}
\newcommand{\tn}[1]{\textnormal{#1}}
\newcommand{\ie}{\textit{i}.\textit{e}., }
\newcommand{\eg}{\textit{e}.\textit{g}. }
\newcommand{\lspo}{\ell_{\textnormal{SPO}}}
\newcommand{\lspop}{\ell_{\textnormal{SPO+}}}
\newcommand{\rspo}{R_{\textnormal{SPO}}}
\newcommand{\vavg}{v_{\mathrm{avg}}}
\newcommand{\ravg}{r_{\mathrm{avg}}}
\title{Online Contextual Decision-Making with a Smart Predict-then-Optimize Method}
\author{Heyuan Liu$^1$ \and Paul Grigas$^1$}
\date{
$^1$Department of Industrial Engineering and Operations Research \\
University of California, Berkeley\thanks{\texttt{\{heyuan\_liu, pgrigas\}@berkeley.edu}} \\[2ex]
\today
}
\begin{document}

\maketitle

\begin{abstract}
    We study an online contextual decision-making problem with resource constraints. At each time period, the decision-maker first predicts a reward vector and resource consumption matrix based on a given context vector and then solves a downstream optimization problem to make a decision. The final goal of the decision-maker is to maximize the summation of the reward and the utility from resource consumption, while satisfying the resource constraints. We propose an algorithm that mixes a prediction step based on the ``Smart Predict-then-Optimize (SPO)'' method with a dual update step based on mirror descent. We prove regret bounds and demonstrate that the overall convergence rate of our method depends on the $\mathcal{O}(T^{-1/2})$ convergence of online mirror descent as well as risk bounds of the surrogate loss function used to learn the prediction model. Our algorithm and regret bounds apply to a general convex feasible region for the resource constraints, including both hard and soft resource constraint cases, and they apply to a wide class of prediction models in contrast to the traditional settings of linear contextual models or finite policy spaces.  We also conduct numerical experiments to empirically demonstrate the strength of our proposed SPO-type methods, as compared to traditional prediction-error-only methods, on multi-dimensional knapsack and longest path instances. 
\end{abstract}

\section{Introduction}\label{sec:intro}

    Decision-making over time in the presence of uncertainty is a common task across many applications of machine learning. Some typical example problems include online network revenue management, resource allocation, and advertisement bidding. In these settings, there is a trade-off between immediate rewards and rewards received at a later time. This trade-off exists since each decision that is made consumes some of a limited amount of resources. 
    Often, the decision-maker does not have full knowledge of the relevant parameters dictating the amount of the rewards and resources consumed at time $t$, and instead has available contextual information that is related to these parameters and can be used to reduce uncertainty in the decision-making process. Indeed, contextual information such as search history, previous reviews, users characteristics, and many others may be available. For example, in online advertising we may not precisely know the probability that the user would click on a given advertisement, but we may build a machine learning model for predicting this probability based on characteristics of the user and the advertisement.
    
    Recently, there has been a growing interest in the development of machine learning models in the ``predict-then-optimize'' (or ``decision-focused'', ``end-to-end learning'', etc.) setting, where models are trained in a way that is guided by the objectives of a downstream optimization task. See, for example, the works of \cite{bertsimas2020predictive,donti2017task,elmachtoub2021smart,kao2009directed,estes2019objective,ho2019data,notz2019prescriptive,kotary2021end}, and the references therein, among others. Prior work in this landscape has primarily been focused on the standard ``static'' setting where decision-making over time is not a critical aspect and there is no consumption of resources.
    In this work, we develop a new framework for integrating decision-focused learning methods, using predict-then-optimize losses, into the online decision-making task. Effectively utilizing the structure of the underlying optimization problem in the decision-making task leads to better decisions more quickly and a better management of the trade-off between immediate and future rewards, which we demonstrate in our numerical experiments. Specifically, we focus on an online contextual stochastic convex optimization problem where we would like to maximize the average (equivalently total) linear reward over time plus a concave utility function that measures the desirability of the average resource consumption levels so far. Our model also includes a convex feasibility constraint on the resource consumption vector. 
    At each time period, the decision-maker is given some contextual features that are associated with the coefficients of the unknown reward objective and resource consumption matrix. We present a ``meta-procedure'' for online-decision making which involves a prediction step as well as a decision step. In the prediction step, a model is trained for predicting the unknown coefficients based on the history of observed contexts and corresponding coefficients. In the decision step, we use these predictions to solve a linear optimization problem with a known feasible region to make a decision. 
    
    Due to the linear structure of the underlying optimization problem in our meta-procedure, we can apply the Smart Predict-then-Optimize (SPO) loss function and its SPO+ convex surrogate loss function developed by \cite{elmachtoub2021smart}. Importantly, the SPO loss function measures the regret of a prediction against the best decision in hindsight and is the ideal loss function to measure the error of the prediction models that we build. Unlike the standard SPO setting, we need to account for the trade-offs present due to the consumption of resources. To do so, we apply the customary technique of introducing dual variables and using primal-dual methods (see, e.g., \citet{agrawal2014fast}). As such, at each time period, we update a set of dual variables using the method of online mirror descent \citep{shalev2012online} and then we update the prediction model by minimizing a surrogate of the SPO loss on a dataset constructed by combining past observations with the current dual variables. A critical part of our contribution involves bridging convergence theory for primal-dual online methods with learning theory in the predict-then-optimize setting.
    In particular, we prove regret bounds for our overall algorithm that combine the $\mathcal{O}(T^{-1/2})$ convergence of online mirror descent with the convergence of the learning process, the rate of which depends on which surrogate loss function is used. To analyze the latter, we leverage risk bounds and related recent statistical results on the SPO loss and its surrogate loss functions \citep{elmachtoub2021smart, balghiti2019generalization,hu2020fast,ho2022risk,liu2021risk}. 
    These results enable us to use a general hypothesis class for fitting the prediction model. More specifically, we are no longer limited to the previously studied linear context or finite policy assumptions \citep{agrawal2016linear,badanidiyuru2018bandits}, and more complex machine learning models, such as random forests and neural networks, may be used. Our bounds hold in both hard and soft resource constraint cases, and we extend prior results using standard upper bound consumption constraints on each resource to arbitrary convex consumption constraints.
    On the experimental side, we examine the empirical performance of different loss functions in the prediction step of our algorithm. On multi-dimensional knapsack and longest path instances, we observe that the methods which perform best are those that account for both resource consumption via dual variables as well as the structure of the optimization problem via SPO-like loss functions.

    Online contextual learning problems have been previously studied under varying assumptions in several different settings. As in our setting, some of these works, including those that study online linear/convex programming, consider the case where full information is provided after a decision is made (see \cite{mahdavi2012trading,agrawal2014dynamic,agrawal2014fast,jenatton2016adaptive,liakopoulos2019cautious,balseiro2020best,li2021online,vera2021online,lobos2021joint}, among others). Other authors have considered bandit and related problems where only partial information is revealed after the decision is made (see \cite{agrawal2013thompson,agrawal2014bandits,badanidiyuru2014resourceful,agrawal2016linear,ferreira2018online,badanidiyuru2018bandits,pacchiano2021stochastic}, among others).

    \paragraph{Notation.}
    Let $\odot$ represent element-wise multiplication between two vectors. 
    Let $I_p$ denote the $p$ by $p$ identity matrix, and let $e$ denote the vector of all ones in the appropriate dimension.
    We will make use of a generic given norm $\vvta{\cdot}$ on $w \in \bbR^d$, as well as its dual norm  $\vvta{\cdot}_*$ which is defined by $\vvta{c}_* = \max_{w: \vvta{w} \le 1} c^T w$. With a slight abuse of notation, we also let $\vvta{\cdot}$ refer to a (possibly different) generic given norm on $v \in \bbR^m$, where which norm we are referring to is clear from the dimension of the corresponding vector.
    For any convex function $f(\cdot): F \to \bbR$ with its domain $F$, let $f^*(\cdot)$ denote its Fenchel conjugate function, namely $f^*(y) := \sup_{x \in F} \{y^T x - f(x)\}$. 
    We also make use of the big $\mathcal{O}$ notation to omit absolute constants.

\section{Online contextual convex optimization and preliminaries}\label{sec:online-convex}

    We now formally describe our online contextual stochastic convex optimization problem, which is prevalent in online decision-making. 
    We assume there are $T$ rounds of decision-making. At each round $t$, we make a decision $w_t \in \mathcal{S} \subseteq \bbR^d$, and associated with this decision is a ``budget consumption vector'' $v_t \in \bbR^m$.  
    Specifically, let $\mathcal{S} \subseteq \bbR^d$ denote the convex and compact feasible region of the decision variables and let $\mathcal{V} \subseteq \bbR^m$ denote the closed and convex feasible region of consumption vectors. 
    In addition, there is a ``utility function'' $u(\cdot): \bbR^m \to \bbR$, assumed to be $L$-Lipschitz and concave with $u(0) = 0$, that describes the consumption vector spending preferences of the decision-maker.
    We assume that we have full knowledge of $\mathcal{S}$, $\mathcal{V}$, and $u(\cdot)$.
    \begin{example}
        Consider a multi-dimensional knapsack problem where, in each round, the decision-maker receives $d$ different orders and may accept at most $k \leq d$ of them. Each accepted order receives a reward and consumes some of $m$ different resources. The amount of resources available per round is $b \in \bbR^m$. The selling price vector of any leftover resources is $y \in \bbR^m$. 
        In this case, the decision space is $\mathcal{S} = \{w \in \bbR^d : \sum_{j = 1}^d w_j \leq k, 0 \leq w \leq e\}$, the resource consumption feasible region is $\mathcal{V} = \{v \in \bbR^m: v \le b\}$, and the resource consumption utility function is $u(v) = y^T (b - v)^+$.
    \end{example}
    
    At time $t$, a tuple $(x_t, r_t, V_t)$ is identically and independently drawn from an unknown distribution $\bbP$, where $r_t \in \bbR^d$ denotes the reward vector, $V_t \in \bbR^{d \times m}$ denotes the budget consumption matrix, and $x_t \in \bbR^p$ denotes the feature vector which contains contextual information about $r_t$ and $V_t$. 
    The reward vector and consumption matrix are unknown when the decision $w_t \in \mathcal{S}$ needs to be made, while the context vector $x_t$ is given instead. 
    After the decision $w_t$ is made, the actual values of $r_t$ and $V_t$ will be revealed, and we will receive $r_t^T w_t$ as reward and also incur $v_t := V_t^T w_t$ consumption in the budget. 
    Let $\ravg$ and $\vavg$ denote the total averaged reward and consumption values, namely $\ravg := \frac1T \sum_{t=1}^T r_t^T w_t$ and $\vavg := \frac{1}{T} \sum_{t=1}^T v_t$. 
    The simultaneous objectives of the decision-maker are: (i) maximize the reward plus the utility of the budget consumption, i.e., $\max \{\ravg + u(\vavg)\}$, and (ii) ensure that the average consumption remains feasible, i.e., $\vavg \in \mathcal{V}$.
    
    \paragraph{Primal-dual formulation and meta-procedure.}
    Online decision-making problems, including online linear optimization and bandit problems with constraints, have been well-studied in the machine learning and operations research communities, wherein a common method to address budget consumption utility and/or feasibility constraints is with the primal-dual max-min form of the original problem. In our setting, we will need two sets of dual variables, $\theta$ and $\lambda$, to address both consumption utility and feasibility constraints simultaneously.
    Let $d_{\mathcal{V}}(\cdot)$ denote the distance function to the set $\mathcal{V}$, measured in the given generic norm $\|\cdot\|$, and let and $\zeta$ be the positive budget penalty parameter. That is, $d_{\mathcal{V}}(\cdot)$ is defined by $d_{\mathcal{V}}(v) := \inf_{\tilde{v} \in \mathcal{V}}\|\tilde{v} - v\|$.
    Then, for any values of $\ravg$ and $\vavg$ as defined above, we can consider a penalized version of the objective and its primal-dual reformulation using conjugate functions as follows:
    \begin{equation}\label{eq:lp-reform}
        \ravg + u(\vavg) - \zeta \cdot d_{\mathcal{V}}(\vavg) = \inf_{\lambda \in \Lambda, \theta \in \Theta} \{\ravg - (\lambda^T \vavg - (-u)^*(\lambda)) - \zeta \cdot (\theta^T \vavg - d_{\mathcal{V}}^*(\theta))\}, 
    \end{equation}
    where $\Lambda$ and $\Theta$ are the domains of the conjugate functions $(-u)^*(\cdot)$ and $d_\mathcal{V}^*(\cdot)$, respectively. 
    Note that $L$-Lipschitzness of $u(\cdot)$ and $1$-Lipschitzness of $d_{\mathcal{V}}(\cdot)$ imply that the domains satisfy $\Lambda \subseteq \{\lambda \in \bbR^m: \vvta{\lambda}_* \le L\}$ and $\Theta \subseteq \{\theta \in \bbR^m: \vvta{\theta}_* \le 1\}$. 
    The main benefit of the introduction of the dual variables is that the primal-dual objective becomes linear in the average reward and consumption whenever the dual variables are fixed. Thus, it is viable to apply an online descent method to the primal-dual min-max problem, which consists of two steps:  (i) making a decision by solving an optimization problem with a linear objective, and (ii) updating the dual variables via online descent. 
    \Cref{alg:meta} below presents a ``meta-procedure'' that combines these two steps with a \emph{prediction step} for predicting the reward vector and consumption matrix based on the context $x_t$. 
    We will specify the precise methods for the prediction model and dual variables update later in \Cref{sec:practical}. 
    
    \IncMargin{1em}\begin{algorithm}[H]
		    Observe feature vector $x_t$\; 
		    Make predictions $(\hat{r}_t$, $\hat{V}_t) \gets g_t(x_t)$ for reward and consumption\;
		    Make the decision $w_t \gets \arg \max_{w \in \mathcal{S}} \{(\hat{r}_t - \hat{V}_t \lambda_t - \zeta \cdot \hat{V}_t \theta_t)^T w\}$\; 
		    Observe realized reward $r_t$ and consumption $V_t$\;
            Update dual variables $\theta_{t+1}, \lambda_{t+1}$, and prediction model $g_{t+1}(\cdot)$\;
		\caption{A ``meta-procedure'' for online contextual decision-making at time $t$}
		\label{alg:meta}
	\end{algorithm}\DecMargin{1em}
	
	\paragraph{Benchmark.}
	In the theoretical part of this work, we compare the performance of the proposed online algorithm against the performance of the optimal static policy, \ie a policy which knows the distribution $\bbP$ but only requires to satisfy the resource budget constraints in expectation. 
	The formal definition of the optimal static policy is given below.
	\begin{definition}\label{def:static-policy}
	    Consider any static policy $\pi(\cdot): \mathcal{X} \to \mathcal{S}$, and define the expected reward and resource consumption of $\pi(\cdot)$ as $\mathrm{rew}(\pi) := \bbE_{(x, r, V) \sim \bbP} [r^T \pi(x)]$ and $\mathrm{con}(\pi) := \bbE_{(x, r, V) \sim \bbP} [V^T \pi(x)]$. 
	    Also, define the optimal static reward as the supremum of all feasible static policies, namely 
	    \begin{equation*}
    	    \mathrm{OPT} := \sup_\pi \, \{\mathrm{rew}(\pi) + u(\mathrm{con}(\pi))\}, \quad \tn{s.t.} \, \mathrm{con}(\pi) \in \mathcal{V}. 
	    \end{equation*}
	\end{definition}
	
	Another benchmark would be the optimal adaptive policy, which knows the distribution $\bbP$ and also takes the history into account. It turns out that the expected total reward of this adaptive policy is upper bounded by the one from the static one \citep{agrawal2016linear}. As has been considered in similar settings (see, for example, \cite{devanur2011near,badanidiyuru2013bandits,agrawal2016linear}), we will therefore work with the optimal static policy benchmark defined above.

\section{An online algorithm using predict-then-optimize and mirror descent}\label{sec:practical}
    
    In this section, we specify the details for the prediction model and dual variables updates in \Cref{alg:meta}. In particular, we first describe the predict-then-optimize methodology for learning the prediction model and then describe the online mirror descent method for updating the dual variables. 
    
    \subsection{Prediction model updating and the SPO loss}\label{subsec:spo}
    In order to obtain a model for predicting reward vectors and consumption matrices, namely a prediction function $g : \bbR^p \to \bbR^d \times \bbR^{d \times m}$, we may leverage machine learning methods to learn the underlying distribution $\bbP$ from previously observed data $\{(x_1, r_1, V_1), \dots, (x_{t-1}, r_{t-1}, V_{t-1})\}$, which are assumed to be independent samples from $\bbP$. Notice that the optimization subroutine to determine $w_t$ in \Cref{alg:meta} involves a \emph{linear} objective function. Ideally, with full knowledge of the distribution $\bbP$, one would determine $w_t$ by solving the optimization problem
    \begin{equation}\label{eq:lp-spo}
        \max_{w \in \mathcal{S}} \, \bbE_{r, V \sim \bbP(\cdot \vert x)} \lrr{(r - V \lambda - \zeta \cdot V \theta)^T w} = \max_{w \in \mathcal{S}} \, \bbE_{r, V \sim \bbP(\cdot \vert x)} [r - V \lambda - \zeta \cdot V \theta]^T w. 
    \end{equation}
    Due to the linearity of the objective, the above equation implies that it is sufficient to learn the conditional expectation of the ``linear cost vector'' $c = r - V \lambda - \zeta \cdot V \theta$. Thus $g(x)$ can be thought of as providing estimates of $\bbE[r \vert x]$ and $\bbE[V \vert x]$, which are then plugged into the corresponding linear optimization problem with feasible region $\mathcal{S}$.
    This setting is essentially a parametric variant of the the predict-then-optimize framework, where the dual variables $\omega := (\lambda, \theta)$ are parameters that specify a linear cost vector that we would like to learn.
    In the usual ``static'' setting without parameters, \cite{elmachtoub2021smart} introduced and studied the SPO loss function, which characterizes the excess cost, or decision error, incurred when making a suboptimal decision due to an imprecise objective cost vector prediction.
    Let us now adapt the SPO loss to our setting. In the usual case, given a predicted cost vector $\hat{c}$ and a realized cost vector $c$, the SPO loss for a linear optimization problem in maximization format is defined as $\lspo(\hat{c}, c) := c^T(w^\ast(c) - w^\ast(\hat{c}))$ where $w^\ast(\cdot)$ is an optimization oracle for $\mathcal{S}$ satisfying $w^\ast(c) \in \arg \max_{w \in \mathcal{S}}\left\{c^Tw\right\}$. In our setting, we need to consider a parameteric variant of the SPO loss where the dual variables are parameters affecting the objective cost vectors.
    In particular, given a prediction $\hat{\mu} := (\hat{r}$, $\hat{V})$, realization $\mu := (r, V)$, dual variables $\omega := (\lambda, \theta)$, as well as fixed budget penalty parameter $\zeta > 0$, the SPO loss of the optimization problem \eqref{eq:lp-spo} is defined as 
    \begin{equation*}
        \lspo(\hat{\mu}, \mu; \omega) := (r - V \lambda - \zeta \cdot V \theta)^T (w^*(\mu; \omega) - w^*(\hat{\mu}; \omega)), 
    \end{equation*}
    where $w^*(\cdot)$ denotes the optimization oracle, which is defined as a function satisfying 
    \begin{equation*}
        w^*(\mu; \omega) \in \arg \max_{w \in \mathcal{S}} \lrrr{(r - V \lambda - \zeta \cdot V \theta)^T w}, \text{ for all } \mu \in \bbR^d \times \bbR^{d \times m} \text{ and } \omega \in \Lambda \times \Theta.
    \end{equation*}
    Since the SPO loss function is usually non-convex and even possibly non-continuous, several surrogate loss functions have been introduced. For example, \cite{elmachtoub2021smart} introduce the SPO+ loss function that accounts for the structure of $\mathcal{S}$ when training the prediction model. This loss function is defined as $\lspop(\hat{c}, c) := \max_{w \in \mathcal{S}} \{(c - 2\hat{c})^T w\} + 2 \hat{c}^T w^*(c) - c^T w^*(c)$. On the other hand, more standard prediction error loss functions, like the squared $\ell_2$ loss of the linear cost vector, may be considered.
    Let $\ell(\cdot, \cdot) : \bbR^d \times \bbR^d \to \bbR$ be any surrogate loss function of the standard SPO loss, which takes cost vector inputs, including possibly itself. Then, just as we defined an extension of the SPO loss to our setting with dual variables, we can also extend the surrogate loss $\ell$ by defining $\ell(\hat{\mu}, \mu; \omega) := \ell(\hat r - \hat V \lambda - \zeta \cdot \hat V \theta, r - V \lambda - \zeta \cdot V \theta)$. 
    Given a surrogate loss function, we use empirical risk minimization to update the prediction model $g_t$ at each step of our online decision-making method.
    Specifically, let $\mathcal{H}$ refer to a hypothesis class of predictor functions mapping features $x$ to predictors $(\hat{r}, \hat V)$ of the reward vector and resource consumption matrix. Then, the prediction model used at iteration $t$ is chosen by $g_t \gets \arg \min_{g \in \mathcal{H}} \sum_{s=1}^{t-1} \ell(g(x_s), \mu_s; \omega_t)$. 
    It is worthwhile to notice that the dual variables used in the loss function are those of the current iteration instead of the previous ones. Intuitively, the current set of dual variables $\omega_t$ is closer to the optimal dual variables of the offline expected problem and hence it leads to a better prediction model. 
    
    A desirable property of the surrogate loss $\ell$ is that the empirical risk minimizer for $\ell$ has small excess risk with respect to the SPO loss. This property is formalized below in \Cref{assumption:spo-risk-bounds}, wherein we measure the excess risk by comparing with the ground truth conditional expectation function of the reward vector and resource consumption matrix, namely $g^*(x) := \bbE_{\mu \sim \bbP(\cdot \vert x)} [\mu]$.
    Let us also define the expected risk functions for the two losses by $\rspo(g; \omega) := \bbE_{(x, \mu) \sim \bbP} [\lspo(g(x), \mu; \omega)]$ and $R_\ell(g; \omega) := \bbE_{(x, \mu) \sim \bbP} [\ell(g(x), \mu; \omega)]$.
    
    
    \begin{assumption}\label{assumption:spo-risk-bounds}
        There exist constants $\kappa_{\mathrm{risk}}, \alpha > 0$ such that, for any integer $n > 0$ and uniformly over all dual variables $\omega \in \Lambda \times \Theta$, the empirical surrogate loss optimal predictor $\hat{g}^n := \arg \min_{g \in \mathcal{H}} \lrrr{\sum_{i=1}^n \ell(g(x_i), \mu_i; \omega)}$
        satisfies the following excess true SPO risk guarantee:
        \begin{equation*}
            \bbE\left[\sup_{\omega \in \Lambda \times \Theta} \left\{\rspo(\hat{g}^n; \omega) - \rspo(g^\ast; \omega)\right\}\right] \leq \kappa_{\mathrm{risk}} \cdot n^{-\alpha}, 
        \end{equation*}
        where the expectation is taken with respect to i.i.d.\ samples $\{(x_i, \mu_i)\}_{i=1}^n$ drawn from $\bbP$.
    \end{assumption}
    
    In general, the rate of convergence $\alpha$ and the constant $\kappa_{\mathrm{risk}}$ in \Cref{assumption:spo-risk-bounds} depend on the properties of the surrogate loss function, the decision feasible region $\mathcal{S}$, the underlying distribution $\bbP$, and the complexity of the hypothesis class $\mathcal{H}$. 
    \Cref{assumption:spo-risk-bounds} is closely tied to uniform calibration properties of the surrogate loss $\ell$ with respect to the SPO loss. In fact, the following remark demonstrates that uniform calibration and an excess risk bound for $\ell$ are sufficient conditions for \Cref{assumption:spo-risk-bounds}.
    \begin{remark}\label{remark:calibration}
        Suppose that the the surrogate loss function $\ell(\cdot, \cdot)$ is uniformly calibrated with respect to the true SPO loss in the standard setting. Namely, for some constants $\kappa_1$ and $\beta$, for any distribution $\tilde{\bbP}$ over cost vectors $c$, we have
        \begin{equation*}
            \bbE_{c \sim \tilde{\bbP}}[\ell(\hat{c}, c) - \ell(\bbE[c], c)] \le \kappa_1 \cdot \epsilon^\beta \, \Rightarrow \, \bbE_{c \sim \tilde{\bbP}}[\lspo(\hat{c}, c) - \lspo(\bbE[c], c)] \le \epsilon,
        \end{equation*}
        for all $\hat{c} \in \bbR^d$ and $\epsilon > 0$.
        Suppose further that the empirical surrogate loss optimal predictor has an excess risk bound that holds uniformly over all dual variables $\omega \in \Lambda \times \Theta$, i.e., for some constants $\kappa_2$ and $\gamma$ we have 
        \begin{equation*}
            \bbE\left[\sup_{\omega \in \Lambda \times \Theta}\left\{R_\ell(\hat{g}^n; \omega) - R_\ell(g^\ast; \omega)\right\}\right] \le \kappa_2 \cdot n^{-\gamma}, 
        \end{equation*}
        for any $n > 0$ and where the expectation is taken with respect to i.i.d.\ samples $\{(x_i, \mu_i)\}_{i=1}^n$ drawn from $\bbP$. Then, under both of these conditions, it holds that 
        \begin{equation*}
            \bbE\left[\sup_{\omega \in \Lambda \times \Theta}\left\{\rspo(\hat{g}^n; \omega) - \rspo(g^\ast; \omega)\right\}\right] \le (\kappa_2 / \kappa_1)^{1 / \beta} \cdot n^{-\gamma / \beta}. 
        \end{equation*}
    \end{remark}
    
    We note that the two conditions in \Cref{remark:calibration} often hold for many choices of surrogate loss $\ell$ and hypothesis classes $\mathcal{H}$. Indeed, recent works including \cite{ho2022risk,hu2020fast,liu2021risk} 
    have examined sufficient conditions under which uniform calibration holds for the SPO loss. Some conditions require an additional restriction on the class of distributions $\tilde{\bbP}$ over cost vectors, but such restrictions will often be satisfied in practice in our setting. Furthermore, for most common choices of surrogate losses $\ell$, e.g., convex and Lipschitz losses like squared $\ell_2$ and SPO+, the required excess risk bound will hold. Indeed, for most common surrogates, due to the boundedness of the dual variable domains $\Lambda$ and $\Theta$, boundedness of the Rademacher complexity of $\mathcal{H}$ would be a sufficient condition to ensure that the bound holds uniformly over the dual variables.
    
    \subsection{Dual variable updates with online mirror descent}\label{subsec:dual-update}
    In this section, we describe how we use online mirror descent method to update the dual variables $\lambda_{t+1}$ and $\theta_{t+1}$. 
    Based on the reformulation \ref{eq:lp-reform}, we define convex functions $\xi_t(\lambda) := -v_t^T \lambda + (-u)^*(\lambda)$ and $\phi_t(\theta) := -v_t^T \theta + d_{\mathcal{V}}^*(\theta)$, where we recall that $v_t = V_t^T w_t$. Note that the domains of $\xi_t(\cdot)$ and $\phi_t(\cdot)$ are the same as the domains of the corresponding conjugate functions, i.e., the previously defined sets $\Lambda$ and $\Theta$, respectively.
    Let $h_{\Lambda}(\cdot), h_{\Theta}(\cdot)$ be differentiable and $1$-strongly convex (with respect to the dual norm of the norm on $v \in \bbR^m$) functions on $\Lambda, \Theta$, respectively, and let $B_{h_{\Lambda}}(\cdot, \cdot), B_{h_{\Theta}}(\cdot, \cdot)$ denote their respective Bregman distances. For example, $B_{h_{\Lambda}}(\cdot)$ is defined by $B_{h_{\Lambda}}(\lambda_1, \lambda_2) := h_{\Lambda}(\lambda_1) - h_{\Lambda}(\lambda_2) - \nabla h_{\Lambda}(\lambda_2)^T(\lambda_1 - \lambda_2)$. 
    Then, online mirror descent uses the following update schemes for the dual variables:
    \begin{equation*}
        \lambda_{t+1} \gets \arg \min_{\lambda \in \Lambda} \{\eta_{\lambda} \nabla \xi_t(\lambda_t)^T \lambda + B_{h_{\Lambda}}(\lambda, \lambda_t)\}, \quad \theta_{t+1} \gets \arg \min_{\theta \in \Theta} \{\eta_{\theta} \nabla \phi_t(\theta_t)^T \theta + B_{h_{\Theta}}(\theta, \theta_t)\}, 
    \end{equation*}
    where $\eta_\lambda, \eta_\theta > 0$ are the ``step-size'' parameters. Here we abuse notation slightly and let $\nabla$ refer to any subgradient of the functions $\xi_t(\cdot)$ and $\phi_t(\cdot)$. We assume that we can efficiently calculate such subgradients, i.e., by evaluating the subproblems defining the conjugate functions. We also need to be able to efficiently calculate the solution of the above subproblems, which depends on the structure of the Bregman distances. For example, recall that the online mirror descent method is the same as the online projected gradient descent method when the norm and Bregman distances are Euclidean.
    The following lemma provides an upper bound of the regret from the online mirror descent method. 
    \begin{lemma}\label{lemma:regret-mirror}[Theorem 2.15 in \cite{shalev2012online}]
        Let $D_\Lambda, D_\Theta$ be upper bounds on the Bregman distances so that $B_{h_{\Lambda}}(\lambda_1, \lambda_2) \leq D_\Lambda$ and $B_{h_{\Theta}}(\theta_1, \theta_2) \leq D_\Theta$ for all $\lambda_1, \lambda_2 \in \Lambda$ and $\theta_1, \theta_2 \in \Theta$. Let $G_\Lambda, G_\Theta$ be upper bounds on norms of the subgradients so that $\|\nabla \xi_t(\lambda_t)\| \leq G_\Lambda$ and $\|\nabla \phi_t(\theta_t)\| \leq G_\Theta$ for all $t = 1, \dots, T$. 
        If we use the constant step-sizes $\eta_\lambda \gets \frac{D_\Lambda}{G_\Lambda \sqrt{T}}$ and $\eta_\theta \gets \frac{D_\Theta}{G_\Theta \sqrt{T}}$, then for all $\lambda \in \Lambda$ and all $\theta \in \Theta$ it holds that 
        \begin{equation*}
            \mathcal{R}_\lambda(T) := \sum_{t=1}^T (\xi_t(\lambda_t) - \xi_t(\lambda)) \le 2 G_\Lambda \sqrt{D_\Lambda T}, \quad \mathcal{R}_\theta(T) := \sum_{t=1}^T (\phi_t(\theta_t) - \phi_t(\theta)) \le 2 G_\Theta \sqrt{D_\Theta T}. 
        \end{equation*}
    \end{lemma}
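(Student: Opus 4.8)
The plan is to reproduce the standard online mirror descent regret analysis; this is exactly Theorem 2.15 of \cite{shalev2012online}, so one could also simply invoke it. The two claimed inequalities are entirely symmetric under the correspondence $(\xi_t, \lambda, h_{\Lambda}, \Lambda, \eta_\lambda, G_\Lambda, D_\Lambda) \leftrightarrow (\phi_t, \theta, h_{\Theta}, \Theta, \eta_\theta, G_\Theta, D_\Theta)$, so I would prove only the bound on $\mathcal{R}_\lambda(T)$ and obtain the bound on $\mathcal{R}_\theta(T)$ by the identical argument. Fix an arbitrary comparator $\lambda \in \Lambda$ and abbreviate $g_t := \nabla \xi_t(\lambda_t)$. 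The first step is to pass from the true regret to the \emph{linearized} regret: since each $\xi_t$ is convex, $\xi_t(\lambda_t) - \xi_t(\lambda) \le \IP{g_t}{\lambda_t - \lambda}$, so it suffices to bound $\sum_{t=1}^T \IP{g_t}{\lambda_t - \lambda}$.

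The heart of the argument is a single-step inequality. I would first write down the first-order optimality condition for the update $\lambda_{t+1} = \arg\min_{\lambda' \in \Lambda}\{\eta_\lambda \IP{g_t}{\lambda'} + B_{h_{\Lambda}}(\lambda', \lambda_t)\}$, namely $\IP{\eta_\lambda g_t + \nabla h_{\Lambda}(\lambda_{t+1}) - \nabla h_{\Lambda}(\lambda_t)}{\lambda - \lambda_{t+1}} \ge 0$ for every $\lambda \in \Lambda$. Combined with the three-point identity for Bregman distances,
\[
\IP{\nabla h_{\Lambda}(\lambda_{t+1}) - \nabla h_{\Lambda}(\lambda_t)}{\lambda - \lambda_{t+1}} = B_{h_{\Lambda}}(\lambda, \lambda_t) - B_{h_{\Lambda}}(\lambda, \lambda_{t+1}) - B_{h_{\Lambda}}(\lambda_{t+1}, \lambda_t),
\]
this yields $\eta_\lambda \IP{g_t}{\lambda_{t+1} - \lambda} \le B_{h_{\Lambda}}(\lambda, \lambda_t) - B_{h_{\Lambda}}(\lambda, \lambda_{t+1}) - B_{h_{\Lambda}}(\lambda_{t+1}, \lambda_t)$. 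I would then split $\IP{g_t}{\lambda_t - \lambda} = \IP{g_t}{\lambda_t - \lambda_{t+1}} + \IP{g_t}{\lambda_{t+1} - \lambda}$ and control the leftover ``drift'' term $\eta_\lambda \IP{g_t}{\lambda_t - \lambda_{t+1}} - B_{h_{\Lambda}}(\lambda_{t+1}, \lambda_t)$ using $1$-strong convexity of $h_{\Lambda}$ with respect to $\vvta{\cdot}_*$, i.e.\ $B_{h_{\Lambda}}(\lambda_{t+1}, \lambda_t) \ge \tfrac12 \vvta{\lambda_{t+1} - \lambda_t}_*^2$, together with the generalized Cauchy--Schwarz bound $\IP{g_t}{\lambda_t - \lambda_{t+1}} \le \vvta{g_t}\, \vvta{\lambda_t - \lambda_{t+1}}_*$. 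Maximizing the scalar expression $\eta_\lambda \vvta{g_t} s - \tfrac12 s^2$ over $s \ge 0$ and invoking $\vvta{g_t} \le G_\Lambda$ gives the per-step bound $\IP{g_t}{\lambda_t - \lambda} \le \tfrac{1}{\eta_\lambda}\big(B_{h_{\Lambda}}(\lambda, \lambda_t) - B_{h_{\Lambda}}(\lambda, \lambda_{t+1})\big) + \tfrac{\eta_\lambda}{2} G_\Lambda^2$.

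Finally I would sum this over $t = 1, \dots, T$. The Bregman terms telescope; discarding the nonnegative remainder $B_{h_{\Lambda}}(\lambda, \lambda_{T+1}) \ge 0$ and bounding the surviving term by $B_{h_{\Lambda}}(\lambda, \lambda_1) \le D_\Lambda$ leaves $\mathcal{R}_\lambda(T) \le \tfrac{D_\Lambda}{\eta_\lambda} + \tfrac{\eta_\lambda G_\Lambda^2 T}{2}$. Choosing the step size to balance the two terms then produces the advertised $\mathcal{O}(\sqrt{T})$ regret, and the verbatim computation for $(\phi_t, \theta)$ closes the $\theta$ case. I expect the only delicate point to be this single-step ``drift'' estimate: one must track carefully which objects live in the primal norm $\vvta{\cdot}$ (the subgradients $g_t$, bounded by $G_\Lambda$) versus the dual norm $\vvta{\cdot}_*$ (the iterates, with respect to which $h_{\Lambda}$ is strongly convex), so that strong convexity and Cauchy--Schwarz pair up correctly. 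The remaining steps---the convexity reduction, the telescoping, and the substitution of the step size---are routine.
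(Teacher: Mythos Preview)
Your proposal is correct and is exactly the standard online mirror descent argument that underlies Theorem~2.15 of \cite{shalev2012online}. The paper itself does not prove this lemma at all---it is stated purely as a citation---so your write-up is strictly more than what the paper provides. One small remark: with your displayed bound $\mathcal{R}_\lambda(T) \le D_\Lambda/\eta_\lambda + \eta_\lambda G_\Lambda^2 T/2$ and the paper's stated step size $\eta_\lambda = D_\Lambda/(G_\Lambda\sqrt{T})$, the constant does not literally come out to $2G_\Lambda\sqrt{D_\Lambda T}$; the step size that balances your two terms is $\eta_\lambda \propto \sqrt{D_\Lambda}/(G_\Lambda\sqrt{T})$, which does yield the claimed bound. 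This is a cosmetic discrepancy in the lemma's statement rather than an issue with your argument.
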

    
    Note that, due to the bounds on the norm of dual variables in the domains $\Lambda$ and $\Theta$, the Bregman constants usually satisfy that $\sqrt{D_\Lambda} = \mathcal{O}(L)$ (recall that $L$ is the Lipschitz constant for $u(\cdot)$), and $\sqrt{D_\Theta} = \mathcal{O}(1)$. Indeed, in the Euclidean case this is guaranteed to be true. 
    Now we have specified the update rules for both the prediction model and the dual variables, we can fully state an implmentable version of the meta algorithm presented previously. This implementable algorithm to solve our online decision-making problem is presented in \Cref{alg:practical}. 
    
    \IncMargin{1em}\begin{algorithm}[H]
		\SetKwData{Left}{left}\SetKwData{This}{this}\SetKwData{Up}{up}
		\SetKwFunction{Union}{Union}\SetKwFunction{FindCompress}{FindCompress}
		\SetKwInOut{Input}{input}\SetKwInOut{Output}{output}
		\Input{Budget penalty parameter $\zeta$ and surrogate loss function $\ell(\cdot, \cdot; \cdot)$.}
			
		Initialize dual variables $\theta_1, \lambda_1$, and prediction model $g_1(\cdot)$\; 
		\For {$t = 1, \dots, T$}{
		    Observe feature vector $x_t$\; 
		    Make predictions $(\hat{r}_t$, $\hat{V}_t) \gets g_t(x_t)$ for reward and consumption\;
		    Make the decision $w_t \gets \arg \max_{w \in \mathcal{S}} \{(\hat{r}_t - \hat{V}_t \lambda_t - \zeta \cdot \hat{V}_t \theta_t)^T w\}$\; 
		    Observe realized reward $r_t$ and consumption $V_t$\;
            Update dual variable $\lambda_{t+1} \gets \arg \min_{\lambda \in \Lambda} \lrrr{\eta_{\lambda} \nabla \xi_t(\lambda_t)^T \lambda + B_{h_{\Lambda}}(\lambda, \lambda_t)}$\; 
            Update dual variable $\theta_{t+1} \gets \arg \min_{\theta \in \Theta} \lrrr{\eta_{\theta} \nabla \phi_t(\theta_t)^T \theta + B_{h_{\Theta}}(\theta, \theta_t)}$\; 
            Update prediction model $g_{t+1} \gets \arg \min_{g \in \mathcal{H}} \{\sum_{s=1}^{t} \ell(g(x_s), \mu_s; \omega_{t+1})\}$\;
		}
		\caption{An implementable algorithm for online contextual decision-making}
		\label{alg:practical}
	\end{algorithm}\DecMargin{1em} 
    
\section{Regret bounds and analysis}\label{sec:regret}

    In this section, we present the regret analysis of \Cref{alg:practical} in two cases:  hard and soft constraints.

    \paragraph{Hard constraints.}
    We assume the starting point of the budget consumption, which we naturally assume to be the zero vector without loss of generality, is inside the consumption feasible region $\mathcal{V}$. The hard constraints case is when we add a stopping condition to \Cref{alg:practical} that terminates whenever the current resource consumption vector violates the constraints enforced by $\mathcal{V}$, i.e., whenever it leaves the feasible region $\mathcal{V}$. 
    We introduce a stopping time $\tau$ that is the first time before time $T$ that the constraints are violated, i.e., $\tau := \min \{t \le T: \frac1T \sum_{s=1}^t V_s^T w_s \not\in \mathcal{V}\}$. 
    Most previous works with hard resource constraints, for example, \cite{agrawal2016linear} and \cite{li2021online}, consider the case of an upper bound budget constraint for each resource, i.e., $\mathcal{V} = \{v: v \le b\}$ for some $b \in \bbR^m$. In such cases, the online algorithm will terminate immediately whenever it violates any of the resource constraints.
    In contrast, let $B_{\mathcal{V}}$ denote the distance from zero to the boundary of the generic closed and convex set $\mathcal{V}$. The constant $B_{\mathcal{V}}$ will be important in our analysis as it demonstrates how the structure of $\calV$ affects the constant in the regret bound. In fact, $B_{\mathcal{V}}$ precisely generalizes constants appearing in previous works considering only upper bound constraints, for example, \cite{agrawal2016linear}. Indeed, when $\mathcal{V} = \{v: v \le b\}$ for some $b \in \bbR^m$ with each component positive, then it holds that $B_{\mathcal{V}} = \min_{i = 1, \ldots, m}b_i > 0$.
    We make the following boundedness assumption on the distribution.
    \begin{assumption}\label{assumption:bounded-consumption}
        Suppose there exists a constant $D_v \ge 1$, such that for any $w \in \mathcal{S}$, it holds that $\vvta{V^T w} \le D_v$ with probability $1$. Let the constant $\kappa_{\mathrm{MD}} := D_v \cdot (\zeta \sqrt{D_{\Theta}} + \sqrt{D_{\Lambda}})$.
    \end{assumption}
    In the hard constraints case, let $\ravg^\tau$ and $\vavg^\tau$ denote the total averaged reward and consumption with stopping time $\tau$, namely $\ravg^\tau := \frac{1}{\tau} \sum_{t=1}^\tau r_t^T w_t$ and $\vavg^\tau := \frac{1}{\tau} \sum_{t=1}^\tau v_t$. 
    Below we provide our main theorem in the hard constraint case, which provides the regret bound of \Cref{alg:practical}.
    
    \begin{theorem}\label{thm:regret-hard}
        Suppose that Assumptions \ref{assumption:spo-risk-bounds} and \ref{assumption:bounded-consumption} hold, and that the budget penalty parameter $\zeta$ satisfies $\zeta \ge \frac{\mathrm{OPT}}{B_{\mathcal{V}}}$. Then \Cref{alg:practical} has the following guarantee: 
        \begin{equation*}
            \mathrm{OPT} - \bbE [\ravg^\tau + u(\vavg^\tau)] \le \kappa_{\mathrm{MD}} \cdot \mathcal{O} (T^{-1/2})  + \kappa_{\mathrm{risk}} \cdot \mathcal{O} (T^{- \alpha}). 
        \end{equation*}
    \end{theorem}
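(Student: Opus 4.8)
The plan is to combine three ingredients: a per-round competitive inequality furnished by the SPO optimization oracle, the conjugate-duality identities behind \eqref{eq:lp-reform} together with the mirror-descent regret of \Cref{lemma:regret-mirror}, and a stopping-time argument that uses the budget penalty $\zeta$ to pay for the rounds lost to early termination.

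First I would set up the per-round inequality. Write $c_t := r_t - V_t\lambda_t - \zeta V_t\theta_t$ for the realized cost vector at round $t$ and recall that $w_t = w^*(\hat\mu_t;\omega_t)$ maximizes the \emph{predicted} objective. Conditioning on the history $\mathcal{F}_{t-1}$ (so that $\lambda_t,\theta_t,g_t$ are fixed and $(x_t,r_t,V_t)$ is a fresh draw), the optimality of the Bayes decision $w^*(\bbE[c_t\mid x_t])$ over $\mathcal{S}$ and the definition of the SPO loss yield $\bbE[c_t^T w_t\mid\mathcal{F}_{t-1}]\ge\bbE[c_t^T\pi^*(x_t)\mid\mathcal{F}_{t-1}]-\Delta_t$, where $\Delta_t:=\rspo(g_t;\omega_t)-\rspo(g^*;\omega_t)\ge0$ is exactly the \emph{excess} SPO risk; the irreducible term $\rspo(g^*;\omega_t)$ cancels precisely because the Bayes decision already beats $\pi^*$. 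Since $\mathrm{con}(\pi^*)\in\mathcal{V}$ and $u$ is concave, Fenchel--Young gives $\lambda_t^T\mathrm{con}(\pi^*)\le(-u)^*(\lambda_t)-u(\mathrm{con}(\pi^*))$ and $\theta_t^T\mathrm{con}(\pi^*)\le d_{\mathcal{V}}^*(\theta_t)$, so the right-hand side is at least $\mathrm{OPT}-(-u)^*(\lambda_t)-\zeta d_{\mathcal{V}}^*(\theta_t)+\bbE[\lambda_t^T v_t+\zeta\theta_t^T v_t\mid\mathcal{F}_{t-1}]-\Delta_t$.

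Next I would convert the dual terms into the utility and distance of the realized average. Using $\xi_t(\lambda)=-v_t^T\lambda+(-u)^*(\lambda)$ and $\phi_t(\theta)=-v_t^T\theta+d_{\mathcal{V}}^*(\theta)$, the conjugacy computation behind \eqref{eq:lp-reform} shows $\inf_\lambda\frac1\tau\sum_{t\le\tau}\xi_t(\lambda)=u(\vavg^\tau)$ and $\inf_\theta\frac1\tau\sum_{t\le\tau}\phi_t(\theta)=-d_{\mathcal{V}}(\vavg^\tau)$; feeding these infima into \Cref{lemma:regret-mirror} (whose bounds hold for every partial horizon $\tau\le T$ with the $T$-tuned step-sizes) turns $\sum_{t\le\tau}\lambda_t^T v_t$ and $\zeta\sum_{t\le\tau}\theta_t^T v_t$ into $\tau u(\vavg^\tau)$ and $\zeta\tau d_{\mathcal{V}}(\vavg^\tau)$ up to additive errors $\mathcal{R}_\lambda(\tau)\le 2G_\Lambda\sqrt{D_\Lambda T}$ and $\zeta\mathcal{R}_\theta(\tau)\le 2\zeta G_\Theta\sqrt{D_\Theta T}$. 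Because $\{\tau\ge t\}$ is $\mathcal{F}_{t-1}$-measurable, summing the per-round inequality against $\mathbbm{1}[\tau\ge t]$ and taking expectations converts the conditional bound into $\bbE[\sum_{t\le\tau}(\cdots)]\ge\mathrm{OPT}\,\bbE[\tau]$; the subgradient norms are $\mathcal{O}(D_v)$ under \Cref{assumption:bounded-consumption} while $\sqrt{D_\Lambda}=\mathcal{O}(L)$ and $\sqrt{D_\Theta}=\mathcal{O}(1)$, so the mirror-descent contribution collapses to $\kappa_{\mathrm{MD}}\cdot\mathcal{O}(\sqrt T)$. The prediction contribution is $\bbE[\sum_{t\le\tau}\Delta_t]\le\sum_{t=1}^{T}\bbE[\Delta_t]$, and here \Cref{assumption:spo-risk-bounds} applies term by term: its $\sup_\omega$ is exactly what licenses the data-dependent $\omega_t$, giving $\bbE[\Delta_t]\le\kappa_{\mathrm{risk}}(t-1)^{-\alpha}$ and hence a total of $\kappa_{\mathrm{risk}}\cdot\mathcal{O}(T^{1-\alpha})$. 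Collecting terms produces $\bbE[\tau(\ravg^\tau+u(\vavg^\tau))]\ge\mathrm{OPT}\,\bbE[\tau]+\zeta\,\bbE[\tau d_{\mathcal{V}}(\vavg^\tau)]-\kappa_{\mathrm{MD}}\mathcal{O}(\sqrt T)-\kappa_{\mathrm{risk}}\mathcal{O}(T^{1-\alpha})$.

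The main obstacle, and the place where $\zeta\ge\mathrm{OPT}/B_{\mathcal{V}}$ and $B_{\mathcal{V}}$ enter, is converting this cumulative, stopping-time bound into the per-round regret. I would prove a geometric lemma for the event $\{\tau<T\}$: since $\tfrac1T\sum_{s\le\tau}v_s\notin\mathcal{V}$ while $\tfrac1T\sum_{s\le\tau-1}v_s\in\mathcal{V}$, the exit point lies just outside $\mathcal{V}$, and a supporting-hyperplane argument (the ball of radius $B_{\mathcal{V}}$ about $0$ sits inside $\mathcal{V}$) gives $d_{\mathcal{V}}(\vavg^\tau)\ge\tfrac{T-\tau}{\tau}B_{\mathcal{V}}$, exploiting $\vavg^\tau=\tfrac{T}{\tau}\cdot\tfrac1T\sum_{s\le\tau}v_s$. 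With $\zeta\ge\mathrm{OPT}/B_{\mathcal{V}}$ this yields $\zeta\tau d_{\mathcal{V}}(\vavg^\tau)\ge\mathrm{OPT}(T-\tau)$ on every sample path (trivially when $\tau=T$), which upgrades the benchmark from $\mathrm{OPT}\,\bbE[\tau]$ to the full $\mathrm{OPT}\cdot T$ and thereby pays for the lost rounds. Finally, using a lower bound on the per-round objective $\ravg^\tau+u(\vavg^\tau)$ (nonnegative in the knapsack and longest-path applications) together with $\tau\le T$ gives $\bbE[\tau(\ravg^\tau+u(\vavg^\tau))]\le T\,\bbE[\ravg^\tau+u(\vavg^\tau)]$, and dividing the resulting inequality $T\,\bbE[\ravg^\tau+u(\vavg^\tau)]\ge\mathrm{OPT}\cdot T-\kappa_{\mathrm{MD}}\mathcal{O}(\sqrt T)-\kappa_{\mathrm{risk}}\mathcal{O}(T^{1-\alpha})$ by $T$ yields the claim. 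I expect the delicate points to be the geometric distance lemma and the careful bookkeeping of the random horizon $\tau$ against the per-round normalization.
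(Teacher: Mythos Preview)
Your high-level decomposition matches the paper's exactly: an SPO excess-risk bound from \Cref{assumption:spo-risk-bounds} (their \Cref{lemma:regret-prediction}), mirror-descent regret plus Fenchel conjugacy (their \Cref{lemma:regret-hard}), and a geometric fact about $B_{\mathcal{V}}$ to absorb the lost rounds (their \Cref{lemma:property-bv}). Your supporting-hyperplane lemma $d_{\mathcal{V}}(\vavg^\tau)\ge\frac{T-\tau}{\tau}B_{\mathcal{V}}$ is correct and is essentially a reformulation of their \Cref{lemma:property-bv} applied with $\kappa=\tau/T$.

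There is, however, a genuine gap in your final step. You reach $\bbE[\tau(\ravg^\tau+u(\vavg^\tau))]\ge\mathrm{OPT}\cdot T-(\cdots)$ and then pass to $T\,\bbE[\ravg^\tau+u(\vavg^\tau)]$ by assuming $\ravg^\tau+u(\vavg^\tau)\ge 0$. That nonnegativity is \emph{not} among the theorem's hypotheses: rewards need not be nonnegative, and a concave $u$ with $u(0)=0$ can be negative away from the origin. So the argument as written does not prove the stated theorem; it proves a weaker statement under an extra hypothesis you yourself flag as application-specific.

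The paper avoids this entirely by normalizing the stopped averages by $1/T$ rather than $1/\tau$ (see the appendix, where $\ravg^\tau:=\tfrac1T\sum_{t\le\tau}r_t^Tw_t$ and $\vavg^\tau:=\tfrac1T\sum_{t\le\tau}v_t$). The reward then appears directly as $\tfrac1T\sum_{t\le\tau}r_t^Tw_t$, and for the utility they use concavity of $u$ together with $u(0)=0$ to obtain
\[
(-u)(\vavg^\tau)=(-u)\Bigl(\tfrac{\tau}{T}\,v'+\bigl(1-\tfrac{\tau}{T}\bigr)\cdot 0\Bigr)\le\tfrac{\tau}{T}\bigl(\lambda^Tv'-(-u)^*(\lambda)\bigr),
\]
with $v'=\tfrac{T}{\tau}\vavg^\tau$ and $\lambda$ the maximizer in Fenchel--Young (this is the heart of their \Cref{lemma:regret-hard}(B)). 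This step replaces your $\tau\,u(\vavg^\tau)$ by $T\,u(\vavg^\tau)$ \emph{without any sign assumption}. Likewise, their \Cref{lemma:property-bv} is applied to the $1/T$-normalized consumption with $\kappa=\tau/T$, producing the $(1-\tfrac{\tau}{T})\zeta B_{\mathcal{V}}$ term that combines directly with $\tfrac{\tau}{T}\,\mathrm{OPT}$ to give $\mathrm{OPT}$, so no factor of $\tau$ ever needs to be stripped at the end. If you prefer to keep the $1/\tau$ normalization, the fix is to replace your nonnegativity step by exactly this concavity-and-$u(0)=0$ argument.
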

    
    We remark that the constant $\kappa_{\mathrm{MD}}$ will usually satisfy $\kappa_{\mathrm{MD}} = D_v \cdot (\zeta\mathcal{O}(1) + \mathcal{O}(L))$ for most choices of Bregman functions. 
    In general, given the required lower bound of $\zeta$ in \Cref{thm:regret-hard}, the best value of the constant $\kappa_{\mathrm{MD}}$ will be $\mathcal{O}(D_v \cdot (\frac{\mathrm{OPT}}{B_{\mathcal{V}}} \sqrt{D_{\Theta}} + \sqrt{D_{\Lambda}}))$ whenever we are able to set $\zeta = \mathcal{O}(\frac{\mathrm{OPT}}{B_{\mathcal{V}}})$.
    The dependence on the term $\frac{\mathrm{OPT}}{B_{\mathcal{V}}}$ in the regret bound is natural, since, if the budget starting point is very close to the boundary of the feasible set (or equivalently, in the budget upper bound case, one of the resource budget values is very close to zero), then \Cref{alg:practical} is likely to terminate in the first several iterations leading to a poor regret bound. 
    
    \paragraph{Soft constraints.}
    In this case, we treat the budget consumption feasibility constraint set $\mathcal{V}$ as a soft constraint, that is, we want to minimize the infeasibility of the consumption instead of terminating the online algorithm whenever we violate the constraint. 
    This case allows for the possibility that the starting point of budget consumption is infeasible, for example, when the feasible region $\mathcal{V}$ consists of both lower and upper bound constraints. 
    The following assumption is required for the regret analysis in this case.
    \begin{assumption}\label{assumption:opt-relax}
        Let $\mathrm{OPT}^{\epsilon}$ denote the optimal objective value of the following relaxed problem: 
        \begin{equation*}
            \mathrm{OPT}^\epsilon := \sup_\pi \, \{\mathrm{rew}(\pi) + u(\mathrm{con}(\pi))\}, \quad \tn{s.t.} \, d_{\mathcal{V}}(\mathrm{con}(\pi)) \le \epsilon. 
        \end{equation*}
        We assume there exists a constant $\zeta_{\mathrm{OPT}}$ such that $\mathrm{OPT}^\epsilon \le \mathrm{OPT} + \zeta_{\mathrm{OPT}} \cdot \epsilon$ for all $\epsilon > 0$. 
    \end{assumption}
    The constant $\zeta_{\mathrm{OPT}}$ in \Cref{assumption:opt-relax} can be interpreted as a subgradient of the concave function $\mathrm{OPT}^{\epsilon}$, which can be demonstrated to exist under standard regularity conditions. For example, in the supplementary materials, we demonstrate that \Cref{assumption:opt-relax} holds when $\mathcal{V}$ contains a feasible policy in its interior. We are now ready to present the main theorem in the soft constraint case, which demonstrates the convergence rate in terms of both the objective and the distance to feasibility in resource consumption.
    \begin{theorem}\label{thm:regret-soft}
        Suppose that Assumptions \ref{assumption:spo-risk-bounds}, \ref{assumption:bounded-consumption}, and \ref{assumption:opt-relax} hold. Then, \Cref{alg:practical} has the following guarantee: 
        \begin{equation*}
            \mathrm{OPT} - \bbE [\ravg + u(\vavg)] \le \kappa_{\mathrm{MD}} \cdot \mathcal{O} (T^{-1/2}) + \kappa_{\mathrm{risk}} \cdot \mathcal{O} (T^{- \alpha}). 
        \end{equation*}
        If additionally the budget penalty parameter $\zeta$ satisfies $\zeta \ge 2(\zeta_{\mathrm{OPT}} + \frac{\sqrt{D_\Lambda}}{\sqrt{D_\Theta}} + \frac{\kappa_{\mathrm{risk}}}{D_v \sqrt{D_\Theta}})$, it holds that 
        \begin{equation*}
            \bbE [d_\mathcal{V}(\vavg)] \le D_v \sqrt{D_\Theta} \cdot \mathcal{O} (T^{-1/2}) + \mathcal{O} (T^{-\alpha}). 
        \end{equation*}
    \end{theorem}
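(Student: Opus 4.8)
The plan is to fold the primal-dual reformulation \eqref{eq:lp-reform}, the online mirror descent regret of \Cref{lemma:regret-mirror}, and the uniform SPO risk bound of \Cref{assumption:spo-risk-bounds} into a single chain of inequalities, and then to extract the feasibility bound by invoking \Cref{assumption:opt-relax}. Throughout I would write $c_t := r_t - V_t\lambda_t - \zeta V_t\theta_t$ and let $\mathcal{F}_{t-1}$ denote the history up to time $t-1$. The crucial structural fact is that $g_t$ and $\omega_t=(\lambda_t,\theta_t)$ are $\mathcal{F}_{t-1}$-measurable while $(x_t,r_t,V_t)$ is a fresh i.i.d.\ draw; this is what lets me pass between per-round quantities and population risks such as $\rspo(\cdot;\omega)$.

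\textbf{First part (objective bound).} Applying \eqref{eq:lp-reform} to the realized averages and using that the infimum separates, I would write $\ravg + u(\vavg) - \zeta\, d_{\mathcal{V}}(\vavg) = \ravg + \inf_\lambda \frac1T\sum_t\xi_t(\lambda) + \zeta\inf_\theta\frac1T\sum_t\phi_t(\theta)$. \Cref{lemma:regret-mirror} lower bounds each infimum by the realized trajectory, e.g. $\inf_\lambda\frac1T\sum_t\xi_t(\lambda)\ge\frac1T\sum_t\xi_t(\lambda_t)-2G_\Lambda\sqrt{D_\Lambda/T}$, which (using that the subgradient norm bounds satisfy $G_\Lambda,G_\Theta=\mathcal{O}(D_v)$ under \Cref{assumption:bounded-consumption}) produces the $\kappa_{\mathrm{MD}}\mathcal{O}(T^{-1/2})$ term. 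It then remains to show $\bbE[\ravg + \frac1T\sum_t\xi_t(\lambda_t) + \zeta\frac1T\sum_t\phi_t(\theta_t)]\ge\mathrm{OPT}-\kappa_{\mathrm{risk}}\mathcal{O}(T^{-\alpha})$. The key identity is that $w_t=w^*(\hat\mu_t;\omega_t)$ gives $c_t^T w_t = c_t^T w^*(\mu_t;\omega_t)-\lspo(\hat\mu_t,\mu_t;\omega_t)$, and taking $\bbE[\cdot\mid\mathcal{F}_{t-1}]$ the realized-optimal term combines with $\rspo(g^*;\omega_t)$ to collapse onto the conditional-mean-optimal decision: $\bbE[c_t^T w_t\mid\mathcal{F}_{t-1}] = \bbE[c_t^T w^*(g^*(x_t);\omega_t)\mid\mathcal{F}_{t-1}] - (\rspo(g_t;\omega_t)-\rspo(g^*;\omega_t))$. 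Since $w^*(g^*(x);\omega_t)$ maximizes the conditional expected cost over $\mathcal{S}$, it dominates $\pi^*(x)$ pointwise, so the first term is at least $\mathrm{rew}(\pi^*)-\lambda_t^T\mathrm{con}(\pi^*)-\zeta\theta_t^T\mathrm{con}(\pi^*)$. Fenchel--Young then gives $(-u)^*(\lambda_t)\ge\lambda_t^T\mathrm{con}(\pi^*)+u(\mathrm{con}(\pi^*))$ and $d_{\mathcal{V}}^*(\theta_t)\ge\theta_t^T\mathrm{con}(\pi^*)$ (the latter using $\mathrm{con}(\pi^*)\in\mathcal{V}$), and these recombine with the conjugate terms inside $\xi_t,\phi_t$ to produce exactly $\mathrm{OPT}$. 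Summing over $t$, the excess-risk terms average to $\kappa_{\mathrm{risk}}\mathcal{O}(T^{-\alpha})$ after invoking \Cref{assumption:spo-risk-bounds} with $n=t-1$, where the uniform supremum over $\omega$ is precisely what permits evaluation at the data-dependent $\omega_t$. Dropping the nonnegative $\zeta\,\bbE[d_{\mathcal{V}}(\vavg)]$ yields the first bound.

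\textbf{Second part (feasibility).} Here I would retain $\zeta\,\bbE[d_{\mathcal{V}}(\vavg)]$ rather than discarding it, obtaining $\zeta\,\bbE[d_{\mathcal{V}}(\vavg)]\le\bbE[\ravg+u(\vavg)]-\mathrm{OPT}+\kappa_{\mathrm{MD}}\mathcal{O}(T^{-1/2})+\kappa_{\mathrm{risk}}\mathcal{O}(T^{-\alpha})$. To control $\bbE[\ravg+u(\vavg)]-\mathrm{OPT}$ I would construct the average static policy $\bar\pi(x):=\frac1T\sum_t\bbE_{\mathcal{F}_{t-1}}[w^*(g_t(x);\omega_t)]$, which lies in $\mathcal{S}$ by convexity and satisfies $\mathrm{rew}(\bar\pi)=\bbE[\ravg]$ and $\mathrm{con}(\bar\pi)=\bbE[\vavg]$ by the independence of the fresh sample from $\mathcal{F}_{t-1}$. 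Concavity of $u$ and Jensen give $\bbE[\ravg+u(\vavg)]\le\mathrm{rew}(\bar\pi)+u(\mathrm{con}(\bar\pi))$; since $\bar\pi$ is feasible for the relaxed problem at level $\epsilon=d_{\mathcal{V}}(\mathrm{con}(\bar\pi))$, \Cref{assumption:opt-relax} bounds this by $\mathrm{OPT}+\zeta_{\mathrm{OPT}}\,d_{\mathcal{V}}(\mathrm{con}(\bar\pi))$, and convexity of $d_{\mathcal{V}}$ gives $d_{\mathcal{V}}(\mathrm{con}(\bar\pi))=d_{\mathcal{V}}(\bbE[\vavg])\le\bbE[d_{\mathcal{V}}(\vavg)]$. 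Substituting yields $(\zeta-\zeta_{\mathrm{OPT}})\,\bbE[d_{\mathcal{V}}(\vavg)]\le\kappa_{\mathrm{MD}}\mathcal{O}(T^{-1/2})+\kappa_{\mathrm{risk}}\mathcal{O}(T^{-\alpha})$; the stated lower bound on $\zeta$ makes $\zeta-\zeta_{\mathrm{OPT}}$ large enough to absorb both the $\zeta$-dependence hidden inside $\kappa_{\mathrm{MD}}=D_v(\zeta\sqrt{D_\Theta}+\sqrt{D_\Lambda})$ and the $\sqrt{D_\Lambda}$ and $\kappa_{\mathrm{risk}}$ factors, leaving $D_v\sqrt{D_\Theta}\cdot\mathcal{O}(T^{-1/2})+\mathcal{O}(T^{-\alpha})$.

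\textbf{Main obstacle.} I expect the delicate step to be the coupling of the online and statistical analyses in the first part: forcing the per-realization SPO loss (which references the realized-optimal $w^*(\mu_t;\omega_t)$) to collapse, in conditional expectation, onto the comparison against $w^*(g^*(x_t);\omega_t)$ and hence the static benchmark $\pi^*$, while simultaneously ensuring \Cref{assumption:spo-risk-bounds} can be applied at the \emph{data-dependent} iterate $\omega_t$. This is exactly where the uniform-over-$\omega$ formulation of the risk bound is indispensable, and the bookkeeping that recombines the Fenchel--Young dual inequalities with the conjugate terms inside $\xi_t$ and $\phi_t$ to recover $\mathrm{OPT}$ (rather than a looser Lagrangian value) must be carried out with care.
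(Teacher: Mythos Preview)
Your proposal is correct and follows essentially the same approach as the paper's proof, which packages the first-part argument into two auxiliary lemmas (\Cref{lemma:regret-prediction} for the SPO-risk comparison against $w^*(g^*(x_t);\omega_t)$ and \Cref{lemma:regret-soft} for the Fenchel--Young/OMD estimates on the $\lambda$- and $\theta$-terms) before recombining them exactly as you describe. Your second-part argument via the averaged static policy $\bar\pi$ is in fact a more careful justification of a step the paper asserts in one line, namely $\bbE[\ravg+u(\vavg)]-\mathrm{OPT}\le\zeta_{\mathrm{OPT}}\,\bbE[d_{\mathcal{V}}(\vavg)]$.
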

    
    The updating of the prediction model at each iteration of \Cref{alg:practical} may present a computational burden in some situations. 
    To address this issue, we develop a variant of our algorithm that only updates the prediction model periodically at a sublinear rate. We demonstrate in \Cref{appendix:beta-efficient} that similar regret bounds hold for this variant.
    
    To give some intuition of the proofs of \Cref{thm:regret-hard} and \Cref{thm:regret-soft}, we remark that the total regret of the online algorithm can be divided into two parts:  (i) the regret from the learning of the prediction model, and (ii) the regret from the suboptimality of the dual variables used in each iteration. 
    In the supplementary materials, we present two lemmas to bound each type of regret. \Cref{lemma:regret-prediction} bounds the regret due to learning, in particular the expected accumulative errors of the online decision $w_t$ due to imperfect predictions, which can be bounded in a sublinear fashion based on \Cref{assumption:spo-risk-bounds}.
    To bound the regret due to suboptimality of the dual variables, we use the regret bound of online mirror descent method in \Cref{lemma:regret-mirror} and properties of the Fenchel conjugate functions, which, with a few additional steps, yield \Cref{lemma:regret-hard} and \Cref{lemma:regret-soft}. In the hard and soft cases respectively, these two Lemmas provide guarantees of the decisions $w_t$ from \Cref{alg:practical} against any feasible static policy.

\section{Computational experiments}\label{sec:experiment}

    We present computational results of synthetic dataset experiments wherein we empirically examine the performance of \Cref{alg:practical} using different surrogate loss functions for training prediction models. 
    We focus on two classes of prediction models to represent different levels of model complexity: {\em (i)} linear models, and {\em (ii)} two-layer neural networks with 128 neurons in the hidden layer. 
    We compare the performance of the empirical minimizer of the following three different loss functions: \emph{(i)} the previously defined SPO+ loss function, \emph{(ii)} the least squares (squared $\ell_2$) loss function of the linear objective $\vvta{(\hat{r} - \hat{V} \lambda - \zeta \cdot \hat{V} \theta) - (r - V \lambda - \zeta \cdot V \theta)}_2^2$, and \emph{(iii)} the least squares loss function of predictions $\vvta{\hat{r} - r}_2^2 + \vvta{\hat{V} - V}_F^2$. 
    Note that the three loss functions utilize different levels of information: the loss function \emph{(iii)} does not use the dual variables and can be viewed as purely learning the relationship between reward, consumption, and feature vectors. The loss function \emph{(ii)} does not utilize the structure of the decision feasible region $\mathcal{S}$ and can viewed as purely learning the relationship between the linear objectives and feature vectors. 
    We also compare with the following three methods as benchmarks:
    \emph{(i)} the sample average approximation (SAA) method, where we use the empirical averages of past observations of $r_t, V_t$ as the prediction $\hat{r}_t, \hat{V}_t$ in \Cref{alg:practical}, 
    \emph{(ii)} the true model, where we use the true (but unknown in practice) conditional expectations $\bbE[r_t \vert x_t], \bbE[V_t \vert x_t]$ as the prediction, and
    \emph{(iii)} the hindsight model, where we use the realization $r_t, V_t$ as the prediction.
    Note that \emph{(ii)} and \emph{(iii)} are not implementable in practice, because \emph{(ii)} uses the unknown true conditional expectations and \emph{(iii)} uses the realized values $r_t, V_t$ that are not available at decision-making time. We expect \emph{(iii)} to perform best and thus we define the ``relative regret'' of an online algorithm as $1 - \mathrm{OBJ} / \mathrm{OBJ}^*$ where $\mathrm{OBJ} := \ravg + u(\vavg)$ is the observed value of the objective function of the online algorithm and $\mathrm{OBJ}^*$ is the corresponding value for the hindsight policy.
    
    In this section, we consider multi-dimensional knapsack problem instances, where the goal is to maximize total reward collected. There is no utility function, the resource consumption feasible region is $\mathcal{V} = \{v: v \le b \cdot e\}$ for constant $b > 0$ and the online algorithm must terminates immediately when any of the resource constraints are violated. 
    In our simulations, the relationship between the true reward vector $r$, true resource consumption matrix $V$, and its context vector $x$ is given by $\tn{vec}(r, V) \gets \xi^{\mathrm{deg}}(W x) \odot \epsilon$, where $\tn{vec}(\cdot)$ is the matrix vectorization function, $\xi^{\mathrm{deg}}$ is a polynomial kernel mapping of degree $\mathrm{deg}$, $W \in \bbR^{d(m+1) \times p}$ is a fixed weight matrix, and $\epsilon \in \bbR^{d(m+1)}$ is a multiplicative noise term. 
    The data generation details are provided in \Cref{appendix:experiment}, we set the polynomial degree to 6 in this experiment, and we run $40$ independent trials for each value of the time horizon length.
    \Cref{fig:multi_knapsack_sample} displays the empirical performance of each method. We observe that when the hypothesis class is linear predictors, \ie the ground truth model is not in the hypothesis class, the pure prediction error method has similar performance as naive SAA, while least squares applied to the linear cost performs slightly better. When the hypothesis class is a neural net, the pure prediction method does better. In all cases, the SPO+ loss performs best and closest to the true model. These results demonstrate that a loss function that properly accounts for the dual variables can improve performance, but performance is improved twofold by a loss function that accounts for the dual variables \emph{and} the underlying structure of the decision feasible region $\mathcal{S}$.
    Further details on our experimental setup and additional plots are provided in \Cref{appendix:experiment}, as well as additional results on longest path instances that use a non-trivial utility function $u(\cdot)$.
    
    \begin{figure}
        \centering
        \includegraphics[width=\textwidth]{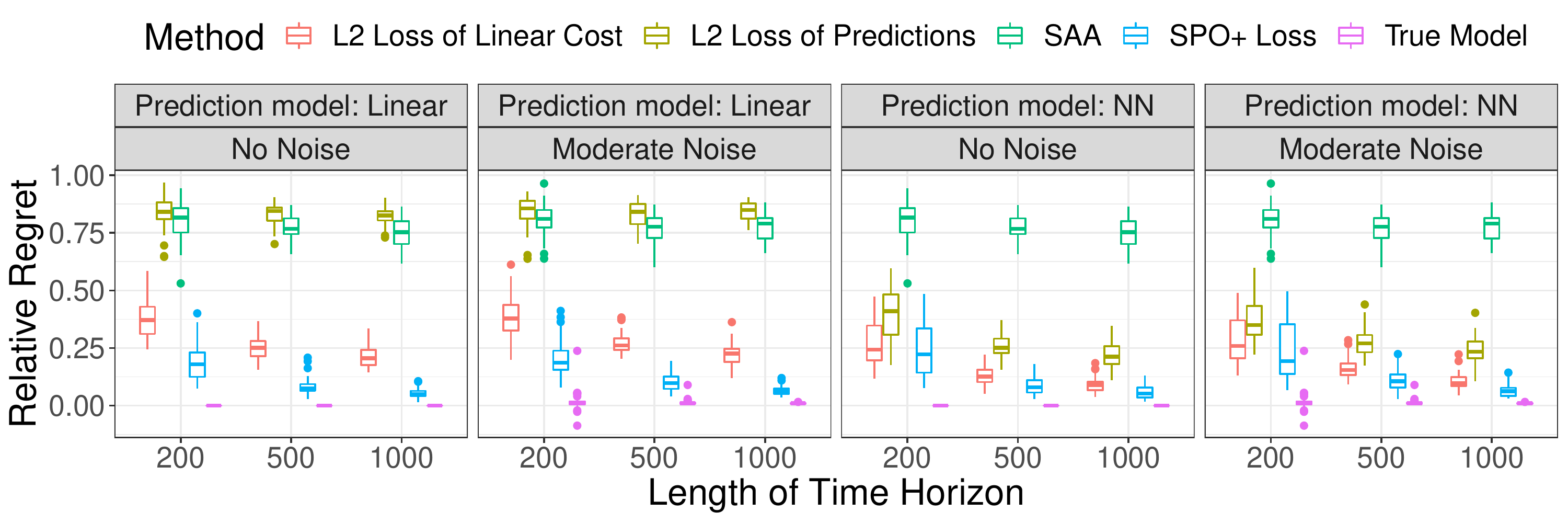}
        \caption{Relative regret for different loss functions on multi-dimensional knapsack instances.}
        \label{fig:multi_knapsack_sample}
    \end{figure}

\section{Conclusions and further directions}\label{sec:conclusion}

    In this work, we develop an algorithm for an online contextual convex optimization problem. 
    We analyze the regret of our algorithm by combining learning theory in the predict-then-optimize setting with the theory of online mirror descent, and as such we able to accommodate a very broad class of prediction models. We demonstrate empirically that the SPO+ performs best in our algorithm, across multiple instances on both linear and single-layer neural network hypothesis classes.
    In some practical problems, the decision-maker only receives partial information about the true reward and resource consumption, depending on the decision that is made. Extending our algorithm and its analysis to this partial information/bandit setting is an interesting and promising direction.

\bibliographystyle{abbrvnat}
\bibliography{ref}

\appendix

\section{Proofs in \Cref{sec:regret}}\label{appendix:proof-regret}

    In this section, we provide some useful lemmas and the complete proofs of \Cref{thm:regret-hard} and \Cref{thm:regret-soft}. 
    For convenience in our proofs, let us introduce some new notation. Let $\calF_{t-1}$ denote the $\sigma$-field of information revealed up to the start of iteration $t$, i.e., the $\sigma$ field of $(x_1, \mu_1), \dots, (x_{t-1}, \mu_{t-1})$.
    Let $\calRup_\lambda(T) := 2 G_\Lambda \sqrt{D_\Lambda T}$ and $\calRup_\theta(T) := G_\Theta \sqrt{D_\Theta T}$ denote the upper bounds of the regret of online mirror descent from \Cref{lemma:regret-mirror}.
    
    \subsection{Useful lemmas}
    We start off by providing an upper bound on the regret from learning the ground truth model when \Cref{assumption:spo-risk-bounds} holds. 
    
    \begin{lemma}\label{lemma:regret-prediction}
        Suppose \Cref{assumption:spo-risk-bounds} holds. For any policy $\pi(\cdot): \mathcal{X} \to \mathcal{S}$ and $T \geq 1$, \Cref{alg:practical} satisfies
        \begin{equation*}
            \mathcal{R}_g(T) := \bbE \lrr{\sum_{t=1}^T (r_t - V_t \lambda_t - \zeta \cdot V_t \theta_t)^T (\pi(x_t) - w_t)} \le \kappa_{\mathrm{risk}} \cdot \mathcal{O}(T^{1-\alpha}). 
        \end{equation*}
    \end{lemma}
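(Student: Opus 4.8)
The plan is to show that, once we condition on the past, each per-step term in $\calR_g(T)$ equals the \emph{excess} true SPO risk of the fitted model $g_t$ evaluated at the current duals $\omega_t$, and then to accumulate these excess risks via \Cref{assumption:spo-risk-bounds}. Write $c_t := r_t - V_t\lambda_t - \zeta\cdot V_t\theta_t$ and $\omega_t := (\lambda_t,\theta_t)$, and recall that $w_t = w^*(g_t(x_t);\omega_t)$ by construction. The key structural facts I would use are that $g_t$ and $\omega_t$ are $\calF_{t-1}$-measurable (they are built from $(x_1,\mu_1),\dots,(x_{t-1},\mu_{t-1})$), while $(x_t,\mu_t)$ is a fresh i.i.d.\ draw independent of $\calF_{t-1}$.

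First I would condition on $\calF_{t-1}$ and integrate out $\mu_t$. With $\omega_t$ fixed, introduce the conditional-mean cost $\bar c_t(x) := \bbE[r\mid x] - \bbE[V\mid x]\lambda_t - \zeta\cdot\bbE[V\mid x]\theta_t$, i.e.\ the linear cost induced by plugging $g^*(x)$ into the objective at duals $\omega_t$. Since both $\pi(x_t)$ and $w_t = w^*(g_t(x_t);\omega_t)$ depend only on $x_t$ once $g_t,\omega_t$ are frozen, the tower property gives
\[
\bbE\big[c_t^{T}(\pi(x_t)-w_t)\mid \calF_{t-1}\big]
= \bbE_{x}\big[\bar c_t(x)^{T}\big(\pi(x) - w^*(g_t(x);\omega_t)\big)\big].
\]
The crucial point — and the step I expect to be the main obstacle — is that one must \emph{not} bound $c_t^{T}\pi(x_t)$ by the realized optimum $c_t^{T}w^*(\mu_t;\omega_t)$; doing so leaves behind the irreducible risk $\rspo(g^*;\omega_t)$, which is generally strictly positive (by Jensen's inequality applied to the convex support function $\max_{w}c^Tw$) and would accumulate \emph{linearly} in $T$. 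Instead, after passing to conditional expectations, I invoke optimality of the \emph{conditional-mean} decision, namely $\bar c_t(x)^{T}\pi(x) \le \max_{w\in\mathcal{S}}\bar c_t(x)^{T}w = \bar c_t(x)^{T} w^*(g^*(x);\omega_t)$, since $g^*(x)$ induces exactly the linear cost $\bar c_t(x)$. Combining this with the algebraic identity
\[
\bbE_{x}\big[\bar c_t(x)^{T}\big(w^*(g^*(x);\omega_t) - w^*(g_t(x);\omega_t)\big)\big]
= \rspo(g_t;\omega_t) - \rspo(g^*;\omega_t),
\]
which follows by expanding both risks and cancelling the $g$-independent term $\bbE[c^{T}w^*(\mu;\omega_t)]$, yields $\bbE[c_t^{T}(\pi(x_t)-w_t)\mid \calF_{t-1}] \le \rspo(g_t;\omega_t) - \rspo(g^*;\omega_t)$, with the irreducible risk now cancelled.

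Finally I would dominate the right-hand side by the uniform excess risk to deal with the data-dependence of $\omega_t$. Because $g_t$ is exactly the empirical surrogate minimizer from \Cref{assumption:spo-risk-bounds} on $t-1$ samples evaluated at the dual $\omega = \omega_t$, and $\omega_t \in \Lambda\times\Theta$ (the mirror-descent iterates stay in the domains by construction), the realization $\omega_t$ is included in the supremum, so pointwise $\rspo(g_t;\omega_t) - \rspo(g^*;\omega_t) \le \sup_{\omega\in\Lambda\times\Theta}\{\rspo(\hat g^{t-1}_{\omega};\omega) - \rspo(g^*;\omega)\}$. Taking total expectation and applying \Cref{assumption:spo-risk-bounds} with $n=t-1$ gives $\bbE[c_t^{T}(\pi(x_t)-w_t)] \le \kappa_{\mathrm{risk}}\cdot (t-1)^{-\alpha}$ for $t\ge 2$, while the $t=1$ term is an absolute constant by compactness of $\mathcal{S}$ and boundedness of $c_1$. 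Summing and using $\sum_{t=2}^{T}(t-1)^{-\alpha} = \mathcal{O}(T^{1-\alpha})$ then delivers $\calR_g(T)\le \kappa_{\mathrm{risk}}\cdot\mathcal{O}(T^{1-\alpha})$. The only remaining routine verifications are that the samples feeding $g_t$ are i.i.d.\ from $\bbP$ and that the iterates remain feasible for the duals, both immediate from \Cref{alg:practical}.
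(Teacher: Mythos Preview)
Your proposal is correct and follows essentially the same route as the paper: split $c_t^{T}(\pi(x_t)-w_t)$ through the Bayes decision $\bar w_t := w^*(g^*(x_t);\omega_t)$, use optimality of $\bar w_t$ for the conditional-mean cost to make the $(\pi(x_t)-\bar w_t)$ piece nonpositive in conditional expectation, identify the $(\bar w_t - w_t)$ piece with the excess SPO risk $\rspo(g_t;\omega_t)-\rspo(g^*;\omega_t)$, and invoke \Cref{assumption:spo-risk-bounds} before summing. Your write-up is in fact more explicit than the paper's on two points it leaves implicit --- the need to pass through the supremum over $\omega$ because $\omega_t$ is $\calF_{t-1}$-measurable (hence random), and the handling of the $t=1$ term, where $(t-1)^{-\alpha}$ is undefined and one must instead absorb a single bounded constant into the $\mathcal{O}(\cdot)$.
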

    
    \begin{proof}
        Let $\bar{w}_t := w^*(g^*(x_t); \omega_t)$, where recall that $g^*(x) := \bbE_{\mu \sim \bbP(\cdot \vert x)} [\mu]$ is the Bayes estimator (i.e., the ground truth model). 
        Since \Cref{assumption:spo-risk-bounds} holds (in particular uniformly over $\omega \in \Lambda \times \Theta$), for any $t \in \{1, \ldots, T\}$, we have
        \begin{align*}
            &\bbE_{(x_1, \mu_1), \ldots, (x_{t-1}, \mu_{t-1}) \sim \bbP^{t-1}}[\rspo(g_t; \omega_t) - \rspo(g^\ast; \omega_t)] = \\
            &\bbE_{(x_1, \mu_1), \ldots, (x_{t-1}, \mu_{t-1}) \sim \bbP^{t-1}}\left[\bbE_{(x_t, \mu_t) \sim \bbP} [(r_t - V_t (\lambda_t + \zeta \cdot \theta_t))^T (\bar{w}_t - w_t) ~|~ \calF_{t-1}]\right] = \\
            &\bbE_{(x_1, \mu_1), \ldots, (x_{t}, \mu_{t}) \sim \bbP^{t}}\left[(r_t - V_t (\lambda_t + \zeta \cdot \theta_t))^T (\bar{w}_t - w_t) \right] \leq \kappa_{\mathrm{risk}} \cdot (t-1)^{-\alpha}.
        \end{align*}
        Hence, it holds that
        \begin{equation}\label{eqn:iteration_risk}
            \bbE\left[(r_t - V_t (\lambda_t + \zeta \cdot \theta_t))^T (\bar{w}_t - w_t)\right] \leq \kappa_{\mathrm{risk}} \cdot (t-1)^{-\alpha}
        \end{equation}
        where the expectation above is with respect to all randomness of \Cref{alg:practical}.
        Also, since $g^*(x_t)$ is the Bayes estimator, for any policy $\pi$, it holds that 
        \begin{equation*}
            \bbE_{(x_t, \mu_t) \sim \bbP} [(r_t - V_t (\lambda_t + \zeta \cdot \theta_t))^T (\pi(x_t) - \bar{w}_t) ~|~ \calF_{t-1}] \le 0,
        \end{equation*}
        and
        \begin{equation*}
            \bbE[(r_t - V_t (\lambda_t + \zeta \cdot \theta_t))^T (\pi(x_t) - \bar{w}_t)] \le 0.
        \end{equation*}
        Therefore, combining \eqref{eqn:iteration_risk} with the above yields 
        \begin{equation*}
            \bbE_{(x_t, \mu_t) \sim \bbP} [(r_t - V_t (\lambda_t + \zeta \cdot \theta_t))^T (\pi(x_t) - w_t)] \le \kappa_{\mathrm{risk}} \cdot (t-1)^{-\alpha}. 
        \end{equation*}
        Then by taking the summation over $t = 1, \dots, T$, we have 
        \begin{equation*}
            \mathcal{R}_g(T) \le \kappa_{\mathrm{risk}} \cdot \mathcal{O} (T^{1 - \alpha}). 
        \end{equation*}
    \end{proof}
    
    The following lemma provides an important property of the constant $B_\calV$ in relation to the distance function $d_{\mathcal{V}}(v)$ and its conjugate.
    \begin{lemma}\label{lemma:property-bv}
        For any $v \not\in \mathcal{V}$ and $\kappa \in [0, 1]$, there exists $\theta \in \Theta$ such that 
        \begin{equation*}
            \kappa \cdot d_\calV^*(\theta) - \theta^T v \le (\kappa - 1) \cdot B_\calV. 
        \end{equation*}
    \end{lemma}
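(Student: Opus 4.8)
The plan is to first obtain a workable expression for the conjugate $d_{\calV}^*$ on its domain $\Theta$. Since the distance function is the infimal convolution $d_{\calV} = \iota_{\calV} \,\square\, \vvta{\cdot}$, where $\iota_{\calV}$ is the $\{0,+\infty\}$-valued indicator of $\calV$, its conjugate is the sum of conjugates $d_{\calV}^* = \iota_{\calV}^* + (\vvta{\cdot})^* = \sigma_{\calV} + \iota_{\{\theta \,:\, \vvta{\theta}_* \le 1\}}$, where $\sigma_{\calV}(\theta) := \sup_{\tilde v \in \calV} \theta^T \tilde v$ is the support function of $\calV$. Consequently $\Theta = \{\theta : \vvta{\theta}_* \le 1\}$ and $d_{\calV}^*(\theta) = \sigma_{\calV}(\theta)$ for every $\theta \in \Theta$, which reduces the claim to exhibiting a $\theta$ with $\vvta{\theta}_* \le 1$ satisfying $\kappa \sigma_{\calV}(\theta) - \theta^T v \le (\kappa - 1) B_{\calV}$.

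Next I would select $\theta$ as a suitable exposing direction at $v$. Because $v \notin \calV$ and $\calV$ is closed and convex, we have $d_{\calV}(v) > 0$; let $p \in \calV$ attain the distance and take $\theta \in \partial d_{\calV}(v)$, which, as a subgradient of a distance function evaluated strictly outside the set, satisfies $\vvta{\theta}_* = 1$ and lies in the normal cone $N_{\calV}(p)$. The Fenchel--Young equality for the pair $(\theta, v)$ then yields the identity $\theta^T v = d_{\calV}^*(\theta) + d_{\calV}(v) = \sigma_{\calV}(\theta) + d_{\calV}(v)$, so that $\theta^T v$ decomposes cleanly into the support value and the distance.

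Finally I would lower bound $\sigma_{\calV}(\theta)$ using the geometric meaning of $B_{\calV}$. Since $0$ lies in the interior of $\calV$ and $B_{\calV}$ is the distance from $0$ to $\partial \calV$, the closed ball $\{u : \vvta{u} \le B_{\calV}\}$ is contained in $\calV$; maximizing $\theta^T u$ over this ball and using $\vvta{\theta}_* = 1$ gives $\sigma_{\calV}(\theta) \ge B_{\calV}$. Substituting the identity from the previous step, the left-hand side becomes $\kappa \sigma_{\calV}(\theta) - \theta^T v = (\kappa - 1)\sigma_{\calV}(\theta) - d_{\calV}(v)$, and since $\kappa - 1 \le 0$, $\sigma_{\calV}(\theta) \ge B_{\calV}$, and $d_{\calV}(v) \ge 0$, this is at most $(\kappa - 1) B_{\calV}$, as desired.

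I expect the \emph{main obstacle} to be the careful justification, for a general (non-Euclidean) norm $\vvta{\cdot}$, that the chosen subgradient of $d_{\calV}$ at $v$ has dual norm exactly one and simultaneously lies in the normal cone $N_{\calV}(p)$ --- equivalently, that the Fenchel--Young equality can be realized by a $\theta$ that both exposes $p$ on $\calV$ and achieves $\theta^T(v-p) = \vvta{v-p}$. This follows from the first-order optimality conditions for the metric projection of $v$ onto $\calV$, but in the non-smooth norm setting one must select the correct subgradient in $\partial \vvta{\cdot}(v - p)$ --- the one that also lies in $N_{\calV}(p)$ --- rather than an arbitrary one.
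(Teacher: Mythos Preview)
Your proof is correct and follows the same logical skeleton as the paper's: identify $d_{\calV}^*=\sigma_{\calV}$ on the dual unit ball, produce a unit-dual-norm $\theta$ with $\theta^Tv\ge\sigma_{\calV}(\theta)$, show $\sigma_{\calV}(\theta)\ge B_{\calV}$, and combine. The packaging differs: you obtain $d_{\calV}^*=\sigma_{\calV}$ via the infimal-convolution identity and select $\theta\in\partial d_{\calV}(v)$ so that Fenchel--Young holds with equality, whereas the paper takes $\theta$ from the separating-hyperplane theorem (yielding only $\theta^Tv>\sigma_{\calV}(\theta)$, which already suffices) and derives $d_{\calV}^*(\theta)=\sigma_{\calV}(\theta)$ by hand. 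For the lower bound $\sigma_{\calV}(\theta)\ge B_{\calV}$ you use the inscribed ball $\{\vvta{u}\le B_{\calV}\}\subseteq\calV$, while the paper exhibits a boundary point on the ray in the dual-attaining direction; both are equivalent once $0\in\calV$.

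Your flagged ``main obstacle'' is milder than you suggest. You never actually need $\theta\in N_{\calV}(p)$ --- the Fenchel--Young equality $\theta^Tv=d_{\calV}(v)+d_{\calV}^*(\theta)$ is the \emph{definition} of $\theta\in\partial d_{\calV}(v)$, so step~3 is immediate. The only thing requiring care is $\vvta{\theta}_*=1$, and that drops out directly: finiteness of $d_{\calV}^*(\theta)$ forces $\vvta{\theta}_*\le1$, while from $\theta^Tv=\sigma_{\calV}(\theta)+d_{\calV}(v)$ and $\sigma_{\calV}(\theta)\ge\theta^Tp$ (any projection $p$) you get $\theta^T(v-p)\ge\vvta{v-p}>0$, hence $\vvta{\theta}_*\ge1$. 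No delicate selection of a subgradient of the norm is needed.
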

    \begin{proof}
        Since $v \not\in \calV$ and $\calV$ is a closed convex set, by the separating hyperplane theorem, there exists a vector $\theta \in \bbR^m$ with $\vvta{\theta}_* = 1$ such that for any $v^\dagger \in \calV$, it holds that $\theta^T v^\dagger < \theta^T v$. 
        Let $\tilde{v}' \in \bbR^m$ be such that $\vvta{\tilde{v}'} = 1$ and $\theta^T \tilde{v}' = 1$. 
        Since $0 \in \mathcal{V}$ and $\sup_{v^\dagger \in \mathcal{V}}\{\theta^T v^\dagger\} < \theta^T v < +\infty$, there exists a constant $\iota \geq 0$ such that $v' \gets \iota \cdot \tilde{v}' \in \partial \calV$.
        Therefore, it holds that $d_\calV^*(\theta) \ge \theta^T v' - d_\calV(v') = \theta^T v' = \vvta{v'} \ge B_\calV$.  
        
        Now consider an arbitrary $\tilde{v}^\circ \in \bbR^m$ and, by the definition of the distance function, let $v^\circ \in \calV$ be such that $\vvta{\tilde{v}^\circ - v^\circ} = d_\calV(\tilde{v}^\circ)$. 
        Then, it holds that 
        \begin{equation*}
            d_\calV^*(\theta) \ge \theta^T v^\circ = \theta^T \tilde{v}^\circ + \theta^T (v^\circ - \tilde{v}^\circ) \ge \theta^T \tilde{v}^\circ - \vvta{v^\circ - \tilde{v}^\circ} = \theta^T \tilde{v}^\circ - d_\calV(\tilde{v}^\circ).
        \end{equation*}
        Since the above is true for arbitrary $\tilde{v}^\circ$, by taking the supremum over the closed set $\mathcal{V}$ and using the fact that $\sup_{v^\dagger \in \mathcal{V}}\{\theta^T v^\dagger\} < \theta^T v < +\infty$, we have that $d_\calV^*(\theta) = \theta^T v^\circ$ for some $v^\circ \in \mathcal{V}$. 
        Then it holds that 
        \begin{align*}
            \kappa \cdot d_\calV^*(\theta) - \theta^Tv & \le \kappa \cdot d_\calV^*(\theta) - \theta^T v^\circ \\ 
            & = (\kappa - 1) \cdot d_\calV^*(\theta) \\ 
            & \le (\kappa - 1) \cdot B_\calV, 
        \end{align*}
        where the last inequality holds since $\kappa - 1 \le 0$. 
    \end{proof}
    
    \subsection{Proof of \Cref{thm:regret-hard}}
    
    The following lemma presents the regret from the suboptimality of the dual variables in the hard constraint case. Recall the definition of the stopping time $\tau := \min \{t \le T: \frac1T \sum_{s=1}^t V_s^T w_s \not\in \mathcal{V}\}$, $\ravg^\tau := \frac1T \sum_{t=1}^\tau r_t^T w_t$ and $\vavg^\tau := \frac{1}{T} \sum_{t=1}^\tau v_t$.
    
    \begin{lemma}\label{lemma:regret-hard}
        For any feasible policy $\pi(\cdot): \mathcal{X} \to \mathcal{S}$ and $T \geq 1$, \Cref{alg:practical} satisfies
        \begin{align*}
            &(A): \ \ \bbE \lrr{\frac1T \cdot \sum_{t=1}^\tau (V_t \theta_t)^T (\pi(x_t) - w_t)} \le \lr{\frac{\tau}{T} - 1} B_{\mathcal{V}} + \frac{\calRup_{\theta}(T)}{T}, \text{ and }\\
            &(B): \ \ \bbE \lrr{\frac1T \cdot \sum_{t=1}^\tau (V_t \lambda_t)^T (\pi(x_t) - w_t)} \le \frac{\tau}{T} \cdot (-u)(\mathrm{con}(\pi)) - \bbE[(-u)(\vavg^\tau)] + \frac{\calRup_\lambda(T)}{T}.  
        \end{align*}
    \end{lemma}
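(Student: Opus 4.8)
The plan is to treat the two statements in parallel, reducing each to the online mirror descent regret bound of \Cref{lemma:regret-mirror} applied to the per-round functions $\phi_t(\theta) = -v_t^T\theta + d_\calV^*(\theta)$ and $\xi_t(\lambda) = -v_t^T\lambda + (-u)^*(\lambda)$, respectively. The starting observation is that, writing $v_t^\pi := V_t^T\pi(x_t)$ and recalling $v_t = V_t^T w_t$, each summand factors through the consumption vectors: $(V_t\theta_t)^T(\pi(x_t)-w_t) = \theta_t^T(v_t^\pi - v_t)$, and similarly for $\lambda_t$. I would then split each sum into a ``$v_t^\pi$ part'' and a ``$v_t$ part,'' handle the former with a conditional-expectation argument and a Fenchel conjugate inequality, and recognize the latter as exactly $\phi_t(\theta_t)$ (resp. $\xi_t(\lambda_t)$), which is precisely what mirror descent controls.

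For part $(A)$, first I would verify that $\{t \le \tau\}$ is an $\calF_{t-1}$-event, since $\tau$ is a stopping time adapted to the consumption history; together with $\calF_{t-1}$-measurability of $\theta_t$ and independence of $(x_t,V_t)$ from $\calF_{t-1}$, this gives $\bbE[\mathbbm{1}[t\le\tau]\,\theta_t^T v_t^\pi] = \bbE[\mathbbm{1}[t\le\tau]\,\theta_t^T\mathrm{con}(\pi)]$. Feasibility of $\pi$, i.e. $\mathrm{con}(\pi)\in\calV$, and the conjugate inequality $\theta^T v' \le d_\calV^*(\theta)$ for $v'\in\calV$ then bound this by $\bbE[\mathbbm{1}[t\le\tau]\,d_\calV^*(\theta_t)]$. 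Summing over $t$ and subtracting $\frac1T\bbE[\sum_{t=1}^\tau\theta_t^T v_t]$ collapses the remaining expression into $\frac1T\bbE[\sum_{t=1}^\tau\phi_t(\theta_t)]$, since $d_\calV^*(\theta_t)-\theta_t^T v_t = \phi_t(\theta_t)$. Applying the mirror descent regret against a comparator $\theta^\dagger$ gives $\frac1T\sum_{t=1}^\tau\phi_t(\theta_t)\le\frac1T\sum_{t=1}^\tau\phi_t(\theta^\dagger)+\frac{\calRup_\theta(T)}{T}$, where $\frac1T\sum_{t=1}^\tau\phi_t(\theta^\dagger) = \frac{\tau}{T}d_\calV^*(\theta^\dagger)-\theta^{\dagger T}\vavg^\tau$. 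The final step is to choose $\theta^\dagger$ via \Cref{lemma:property-bv} with $\kappa=\tau/T$ whenever $\vavg^\tau\notin\calV$ --- which, by the definition of $\tau$, is forced whenever $\tau<T$ --- obtaining $\frac{\tau}{T}d_\calV^*(\theta^\dagger)-\theta^{\dagger T}\vavg^\tau\le(\tfrac{\tau}{T}-1)B_\calV$; in the complementary case $\vavg^\tau\in\calV$ (which forces $\tau=T$) the choice $\theta^\dagger=0$ gives $0=(\tfrac{\tau}{T}-1)B_\calV$.

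For part $(B)$ the structure is identical, with $(-u)^*$ replacing $d_\calV^*$. The conditional-expectation step now uses the conjugate inequality $\lambda^T\mathrm{con}(\pi)\le(-u)^*(\lambda)+(-u)(\mathrm{con}(\pi))$, which produces the $\frac{\tau}{T}(-u)(\mathrm{con}(\pi))$ term (with $\bbE[\tau]$ in place of $\tau$), while the residual collapses to $\frac1T\bbE[\sum_{t=1}^\tau\xi_t(\lambda_t)]$ via $(-u)^*(\lambda_t)-\lambda_t^T v_t = \xi_t(\lambda_t)$. After invoking the mirror descent bound for $\lambda$, I would choose the comparator $\lambda^\dagger\in\partial(-u)(\vavg^\tau)$, which lies in $\Lambda$ by $L$-Lipschitzness of $u$; then $\frac{\tau}{T}(-u)^*(\lambda^\dagger)\le(-u)^*(\lambda^\dagger)$ (using $(-u)^*\ge 0$, which follows from $u(0)=0$, together with $\tau/T\le 1$) combined with the Fenchel--Young equality $\lambda^{\dagger T}\vavg^\tau = (-u)(\vavg^\tau)+(-u)^*(\lambda^\dagger)$ yields $-\lambda^{\dagger T}\vavg^\tau+\frac{\tau}{T}(-u)^*(\lambda^\dagger)\le-(-u)(\vavg^\tau)$, producing the $-\bbE[(-u)(\vavg^\tau)]$ term.

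The main obstacle, and the place where care is most needed, is the legitimacy of invoking the mirror descent regret bound with a \emph{data-dependent, random} comparator ($\theta^\dagger$ depending on $\vavg^\tau$, and $\lambda^\dagger$ on $\vavg^\tau$) evaluated on the random-length prefix $t=1,\dots,\tau$. This is sound because the guarantee of \Cref{lemma:regret-mirror} holds pathwise and simultaneously for every fixed comparator, so it applies realization-by-realization to the selected comparator; and because, for a fixed step-size, the cumulative regret is nondecreasing in the number of rounds, the length-$\tau$ prefix regret is dominated by the full-horizon bounds $\calRup_\theta(T)$ and $\calRup_\lambda(T)$ since $\tau\le T$. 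Beyond this, the only subtlety is the bookkeeping that matches the case split on whether $\vavg^\tau\in\calV$ with the sign of $(\tfrac{\tau}{T}-1)B_\calV$; everything else reduces to the conjugate-duality identities noted above. I would finally take expectations, with $\tau$ random, to arrive at the stated bounds.
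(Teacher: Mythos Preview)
Your proposal is correct and follows essentially the same route as the paper. Part $(A)$ is identical: the Fenchel--Young bound on the $\pi$-term, the mirror-descent regret bound on the $w_t$-term, and the case split via \Cref{lemma:property-bv} (or $\theta=0$ when $\tau=T$) all match. In part $(B)$ there is a small but genuine difference in the comparator: you take $\lambda^\dagger\in\partial(-u)(\vavg^\tau)$ and close the $\tfrac{\tau}{T}$-gap using $(-u)^*\ge 0$ (which follows from $u(0)=0$), whereas the paper takes $\lambda\in\partial(-u)\bigl(\tfrac{T}{\tau}\vavg^\tau\bigr)$ and closes the gap via convexity of $-u$ between $0$ and $\tfrac{T}{\tau}\vavg^\tau$ (again using $u(0)=0$). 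Both are one-line Fenchel arguments yielding the same bound, so neither buys more than the other. One minor imprecision: the cumulative regret $\sum_{t\le\tau}(\phi_t(\theta_t)-\phi_t(\theta))$ is not itself nondecreasing in $\tau$ (summands can be negative); what you actually need---and what suffices---is that the standard OMD \emph{bound} $D_\Theta/\eta_\theta+(\eta_\theta/2)\sum_{t\le\tau}\|\nabla\phi_t(\theta_t)\|^2$ is nondecreasing in $\tau$, giving the prefix bound $\le\calRup_\theta(T)$. The paper invokes this without comment as well.
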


    \begin{proof}
        Let us first prove inequality $(A)$. Since $\pi$ is a feasible policy, the Fenchel-Young inequality yields
        \begin{equation}\label{eq:lemma-regret-hard-1}
            \bbE \lrr{\pi(x_t)^T V_t \theta_t - d_{\mathcal{V}}^*(\theta_t) \vert \mathcal{F}_{t-1}} = (\mathrm{con}(\pi))^T \theta_t - d_{\mathcal{V}}^*(\theta_t) \le d_{\mathcal{V}}(\mathrm{con}(\pi)) = 0. 
        \end{equation}
        Also, \Cref{lemma:regret-mirror} guarantees that for any $\theta \in \Theta$, we have
        \begin{equation*}
            \sum_{t=1}^\tau (\phi_t(\theta_t) - \phi_t(\theta)) \le \calRup_\theta(T). 
        \end{equation*}
        Given the definition of $\phi(\cdot)$, which is $\phi_t(\theta') = -v_t^T \theta' + d_\calV^*(\theta')$, and $v_t = V_t^T w_t$, the above is equivalent to
        \begin{equation}\label{eq:lemma-regret-hard-2}
            \sum_{t=1}^\tau (-w_t^T V_t \theta_t + d_\calV^*(\theta_t) + w_t^T V_t \theta - d_\calV^*(\theta)) \le \calRup_\theta(T)
        \end{equation}
        Therefore, for any $\theta \in \Theta$, it holds that 
        \begin{align*}
            & \bbE \lrr{\frac1T \cdot \sum_{t=1}^\tau (V_t \theta_t)^T (\pi(x_t) - w_t)} \\ 
            \le \, & \bbE \lrr{\frac1T \cdot \sum_{t=1}^\tau \lr{\pi(x_t)^T V_t \theta_t - w_t^T V_t \theta + d_{\mathcal{V}}^*(\theta) - d_{\mathcal{V}}^*(\theta_t)}} + \frac{\calRup_{\theta}(T)}{T} \\ 
            \le \, & \bbE \lrr{\frac1T \cdot \sum_{t=1}^\tau \lr{- w_t^T V_t \theta + d_{\mathcal{V}}^*(\theta)}} + \frac{\calRup_{\theta}(T)}{T} \\ 
            = \, & \bbE \lrr{\frac{\tau}{T} \cdot d_{\mathcal{V}}^*(\theta) - \theta^T \vavg^\tau} + \frac{\calRup_{\theta}(T)}{T}. 
        \end{align*}
        where the first inequality comes from \eqref{eq:lemma-regret-hard-2}, and the second inequality comes from \eqref{eq:lemma-regret-hard-1}.
        
        If $\tau = T$, we pick $\theta \gets 0$, and it holds that $\frac{\tau}{T} \cdot d_{\mathcal{V}}^*(\theta) = \theta^T \vavg^\tau = 0$, which implies $(A)$.
        If $\tau < T$, it implies $\vavg^\tau \not\in \mathcal{V}$. 
        Then following the results in \Cref{lemma:property-bv} we know that there exists $\theta \in \Theta$ such that $\frac{\tau}{T} \cdot d_{\mathcal{V}}^*(\theta) - \theta^T \vavg^\tau \le \lr{\frac{\tau}{T} - 1} \cdot B_{\mathcal{V}}$. 
        Therefore, for both $\tau = T$ and $\tau < T$, we have $(A)$. 
        
        Let us first prove inequality $(B)$. First, the Fenchel-Young inequality again yields
        \begin{equation}\label{eq:lemma-regret-hard-3}\begin{aligned}
            \frac1T \sum_{t=1}^\tau \bbE \lrr{\pi(x_t)^T V_t \lambda_t - (-u)^*(\lambda_t) \vert \mathcal{F}_{t-1}} 
            & = \frac1T \sum_{t=1}^\tau \lr{\mathrm{con}(\pi)^T \lambda_t - (-u)^* (\lambda_t)} \\ 
            & \le \frac1T \sum_{t=1}^\tau (-u)(\mathrm{con}(\pi)) = \frac{\tau}{T} (-u)(\mathrm{con}(\pi)). 
        \end{aligned}\end{equation}
        Recall the definition of $\phi(\cdot)$, which is $\phi(\lambda) = -w_t^T V_t \lambda + (-u)^*(\lambda)$. Then, following \Cref{lemma:regret-mirror}, for any $\lambda \in \Lambda$, it holds that 
        \begin{equation}\label{eq:lemma-regret-hard-4}
            \sum_{t=1}^\tau \lr{w_t^T V_t \lambda - (-u)^*(\lambda) - w_t^T V_t \lambda_t + (-u)^*(\lambda_t)} = \sum_{t=1}^\tau (\xi_t(\lambda_t) - \xi_t(\lambda)) \le \calRup_{\lambda}(T). 
        \end{equation}
        Let $v' \gets \frac{T}{\tau} \cdot \vavg^\tau$, and pick $\lambda \in \Lambda$ such that $(-u)(v') = (v')^T \lambda - (-u)^*(\lambda)$. 
        Then, using concavity of $u(\cdot)$ and $u(0) = 0$, it holds that 
        \begin{equation}\label{eq:lemma-regret-hard-5}\begin{aligned}
            (-u)(\vavg^\tau) & = (-u) \lr{\frac{\tau}{T} \cdot v' + \lr{1 - \frac{\tau}{T}} \cdot 0} \\ 
            & \le \frac{\tau}{T} \cdot (-u)(v') + \lr{1 - \frac{\tau}{T}} \cdot (-u)(0) \\ 
            & = \frac{\tau}{T} \cdot (\lambda^T v' - (-u)^*(\lambda)) \\ 
            & = \frac1T \cdot \sum_{t=1}^\tau (w_t^T V_t \lambda - (-u)^*(\lambda)). 
        \end{aligned}\end{equation}
        By adding \eqref{eq:lemma-regret-hard-3}, \eqref{eq:lemma-regret-hard-4} and \eqref{eq:lemma-regret-hard-5}, we arrive at the inequality $(B)$. 

    \end{proof}
    
    \begin{proof}[Proof of \Cref{thm:regret-hard}]
        By combining the results in \Cref{lemma:regret-prediction} and \Cref{lemma:regret-hard}, for any feasible policy $\pi$, it holds that 
        \begin{align*}
            & \frac{\tau}{T} \cdot (\mathrm{rew}(\pi) + u(\mathrm{con}(\pi))) + \lr{1 - \frac{\tau}{T}} \cdot B_{\mathcal{V}} \zeta - \bbE \lrr{\frac1T \sum_{t=1}^\tau r_t^T w_t + u(\vavg^\tau)} \\ 
            \le \, & \frac1T \cdot (\mathcal{R}_g(T) + \calRup_\lambda(T) + \zeta \cdot \calRup_\theta(T)). 
        \end{align*}
        Also, it holds that $\mathcal{R}_g(T) \le \kappa_{\mathrm{risk}} \cdot \mathcal{O}(T^{1-\alpha})$, $\calRup_\lambda(T) \le D_v \sqrt{D_\Lambda} \cdot \mathcal{O}(T^{1/2})$, and $\calRup_\theta(T) \le D_v \sqrt{D_\Theta} \cdot \mathcal{O}(T^{1/2})$. 
        Therefore, when $\zeta \ge \frac{\mathrm{OPT}}{B_{\mathcal{V}}}$, it holds that 
        \begin{equation*}
            \mathrm{OPT} - \bbE [\ravg^\tau + u(\vavg^\tau)] \le \kappa_{\mathrm{MD}} \cdot \mathcal{O}(T^{-1/2}) + \kappa_{\mathrm{risk}} \cdot \mathcal{O}(T^{- \alpha}). 
        \end{equation*}
    \end{proof}
    
    \subsection{Proof of \Cref{thm:regret-soft}}
    
    The following lemma presents the regret from the suboptimality of the dual variables in the soft constraint case. 
    
    \begin{lemma}\label{lemma:regret-soft}
        For any feasible policy $\pi(\cdot): \mathcal{X} \to \mathcal{S}$ and $T \geq 1$, \Cref{alg:practical} satisfies
        \begin{align*}
            &(A): \ \ \bbE \lrr{\frac1T \cdot \sum_{t=1}^T (V_t \lambda_t)^T (\pi(x_t) - w_t)} \le (-u)(\mathrm{con}(\pi)) - \bbE [(-u)(\vavg)] + \frac{\calRup_{\lambda}(T)}{T}, \\
            &(B): \ \ \bbE \lrr{\frac1T \cdot \sum_{t=1}^T (V_t \theta_t)^T (\pi(x_t) - w_t)} \le - \bbE [d_{\mathcal{V}}(\vavg)] + \frac{\calRup_{\theta}(T)}{T}. 
        \end{align*}
    \end{lemma}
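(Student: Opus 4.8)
The plan is to establish $(A)$ and $(B)$ by the same two-ingredient argument used for \Cref{lemma:regret-hard}, but without the complications of the stopping time, since here the sums run over the full horizon $t = 1, \dots, T$. In each case I would bound the policy-side terms $\pi(x_t)^T V_t(\cdot)$ via the Fenchel--Young inequality applied to the feasible benchmark $\pi$, bound the decision-side terms $w_t^T V_t(\cdot)$ via the online mirror descent regret guarantee of \Cref{lemma:regret-mirror}, and then add the two bounds so that the conjugate terms cancel. Throughout I would use that $v_t = V_t^T w_t$, so $\xi_t(\lambda) = -w_t^T V_t\lambda + (-u)^*(\lambda)$ and $\phi_t(\theta) = -w_t^T V_t\theta + d_\calV^*(\theta)$.

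For part $(A)$, I would first observe that $(x_t, V_t)$ is independent of $\calF_{t-1}$ while $\lambda_t$ is $\calF_{t-1}$-measurable, so $\bbE[\pi(x_t)^T V_t \lambda_t \mid \calF_{t-1}] = \mathrm{con}(\pi)^T \lambda_t$. Since $u$ is concave, $(-u)$ is convex and Fenchel--Young gives $\mathrm{con}(\pi)^T\lambda_t - (-u)^*(\lambda_t) \le (-u)(\mathrm{con}(\pi))$; summing over $t$ and taking expectations yields $\frac1T\sum_{t=1}^T \bbE[\pi(x_t)^T V_t \lambda_t - (-u)^*(\lambda_t)] \le (-u)(\mathrm{con}(\pi))$. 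For the decision side I would apply \Cref{lemma:regret-mirror} with the path-dependent comparator $\lambda \in \partial(-u)(\vavg)$, which attains $\vavg^T\lambda - (-u)^*(\lambda) = (-u)(\vavg)$ and lies in $\Lambda = \dom((-u)^*)$; after dividing by $T$ and taking expectations this gives $\bbE[(-u)(\vavg)] - \frac1T\sum_{t=1}^T\bbE[w_t^T V_t\lambda_t - (-u)^*(\lambda_t)] \le \calRup_\lambda(T)/T$. Adding the two inequalities, the $(-u)^*(\lambda_t)$ terms cancel and $(A)$ follows.

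Part $(B)$ is entirely parallel, with $d_\calV$ replacing $(-u)$. Feasibility of $\pi$ means $\mathrm{con}(\pi) \in \calV$, so $d_\calV(\mathrm{con}(\pi)) = 0$, and Fenchel--Young for the convex $d_\calV$ gives $\bbE[\pi(x_t)^T V_t \theta_t - d_\calV^*(\theta_t) \mid \calF_{t-1}] = \mathrm{con}(\pi)^T\theta_t - d_\calV^*(\theta_t) \le 0$. On the decision side I would invoke \Cref{lemma:regret-mirror} with $\theta \in \partial d_\calV(\vavg)$, which achieves $\vavg^T\theta - d_\calV^*(\theta) = d_\calV(\vavg)$ and satisfies $\|\theta\|_* \le 1$ by the $1$-Lipschitzness of $d_\calV$, so $\theta \in \Theta$. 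This yields $\bbE[d_\calV(\vavg)] - \frac1T\sum_{t=1}^T\bbE[w_t^T V_t\theta_t - d_\calV^*(\theta_t)] \le \calRup_\theta(T)/T$, and adding the Fenchel--Young bound gives $(B)$.

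The argument is routine telescoping once the ingredients are in place; the only point requiring care is the legitimacy of substituting the conjugate-attaining dual variables $\lambda \in \partial(-u)(\vavg)$ and $\theta \in \partial d_\calV(\vavg)$ into \Cref{lemma:regret-mirror}. This is valid because these subgradients are guaranteed to lie in $\Lambda$ and $\Theta$ (via the Lipschitz constants $L$ and $1$ of $u$ and $d_\calV$), and because the mirror descent regret bound holds simultaneously for every fixed comparator, hence also for a comparator chosen as a measurable function of the realized sample path. Compared to the hard-constraint \Cref{lemma:regret-hard}, the absence of a stopping time removes the need for \Cref{lemma:property-bv} and the concavity-interpolation step, so I expect the soft case to be strictly simpler.
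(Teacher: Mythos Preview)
Your proposal is correct and follows essentially the same approach as the paper: Fenchel--Young on the policy side, the mirror-descent regret bound on the decision side with a conjugate-attaining comparator, and then adding so the conjugate terms cancel. You are in fact a bit more careful than the paper in explicitly justifying why the path-dependent comparators $\lambda \in \partial(-u)(\vavg)$ and $\theta \in \partial d_\calV(\vavg)$ may be substituted into the almost-sure regret bound of \Cref{lemma:regret-mirror}.
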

    \begin{proof}
        First, there exists $\lambda \in \Lambda$ such that 
        \begin{equation}\label{eq:lemma-regret-soft-1}
            (-u)(\vavg) = \vavg^T \lambda - (-u)^*(\lambda) = \frac1T \sum_{t=1}^T \lr{w_t^T V_t \lambda - (-u)^*(\lambda)}. 
        \end{equation}
        Next, the Fenchel-Young inequality yields
        \begin{equation}\label{eq:lemma-regret-soft-2}\begin{aligned}
            \frac1T \sum_{t=1}^T \bbE \lrr{\pi(x_t)^T V_t \lambda_t - (-u)^*(\lambda_t) \vert \mathcal{F}_{t-1}} 
            & = \frac1T \sum_{t=1}^T \lr{\mathrm{con}(\pi)^T \lambda_t - (-u)^* (\lambda_t)} \\ 
            & \le \frac1T \sum_{t=1}^T (-u)(\mathrm{con}(\pi)) = (-u)(\mathrm{con}(\pi)). 
        \end{aligned}\end{equation}
        \Cref{lemma:regret-mirror} guarantees that 
        \begin{equation*}
            \sum_{t=1}^T (\xi_t(\lambda_t) - \xi_t(\lambda)) \le \calRup_\lambda(T). 
        \end{equation*}
        Given the definition of $\xi(\cdot)$, which is $\xi_t(\lambda') = -v_t^T \lambda' + (-u)^*(\lambda')$, and $v_t = V_t^T w_t$, it holds that 
        \begin{equation}\label{eq:lemma-regret-soft-3}
            \sum_{t=1}^T \lr{-w_t^T V_t \lambda_t + (-u)^*(\lambda_t) + w_t^T V_t \lambda - (-u)^*(\lambda)} \le \calRup_\lambda(T). 
        \end{equation}
        Therefore, by adding \eqref{eq:lemma-regret-soft-1}, \eqref{eq:lemma-regret-soft-2}, and \eqref{eq:lemma-regret-soft-3}, it holds that 
        \begin{align*}
            & \bbE \lrr{\frac1T \cdot \sum_{t=1}^T (V_t \lambda_t)^T (\pi(x_t) - w_t)} \\ 
            \le \, & \bbE \lrr{\frac1T \cdot \sum_{t=1}^T \lrr{(V_t \lambda_t)^T \pi(x_t) - w_t^T V_t \lambda + (-u)^*(\lambda) - (-u)^*(\lambda_t))}} + \frac{\calRup_{\lambda}(T)}{T} \\ 
            \le \, & (-u)(\mathrm{con}(\pi)) - \bbE [(-u)(\vavg)] + \frac{\calRup_{\lambda}(T)}{T}. 
        \end{align*}
        Applying the same reasoning to the other set of dual variables and using that $\mathrm{con}(\pi) \in \mathcal{V}$, we have 
        \begin{equation*}
            \bbE \lrr{\frac1T \cdot \sum_{t=1}^T (V_t \theta_t)^T (\pi(x_t) - w_t)} \le d_{\mathcal{V}}(\mathrm{con}(\pi)) - \bbE [d_{\mathcal{V}}(\vavg)] + \frac{\calRup_{\theta}(T)}{T} = - \bbE [d_{\mathcal{V}}(\vavg)] + \frac{\calRup_{\theta}(T)}{T}. 
        \end{equation*}
    \end{proof}
    
    \begin{proof}[Proof of \Cref{thm:regret-soft}]
        By combining the results in \Cref{lemma:regret-prediction} and \Cref{lemma:regret-soft}, for any feasible policy $\pi$, it holds that 
        \begin{align*}
            & \mathrm{rew}(\pi) + u(\mathrm{con}(\pi)) - \bbE \lrr{\frac1T \sum_{t=1}^T r_t^T w_t + u(\vavg)} \\ 
            \le \, & \frac1T \cdot (\mathcal{R}_g(T) + \calRup_\lambda(T) + \zeta \cdot \calRup_\theta(T) - \zeta \cdot \bbE [d_\mathcal{V}(\vavg)]. 
        \end{align*}
        Since $\bbE [d_\mathcal{V}(\vavg)] \ge 0$, it holds that 
        \begin{equation*}
            \mathrm{OPT} - \bbE [\ravg + u(\vavg)] \le \kappa_{\mathrm{MD}} \cdot \mathcal{O}(T^{-1/2}) + \kappa_{\mathrm{risk}} \cdot \mathcal{O}(T^{- \alpha}). 
        \end{equation*}
        Also, when \Cref{assumption:opt-relax} holds, it holds that 
        \begin{equation*}
            \bbE [\ravg + u(\vavg)] - \mathrm{OPT} \le \zeta_{\mathrm{OPT}} \cdot \bbE [d_\mathcal{V}(\vavg)], 
        \end{equation*}
        and therefore, it holds that 
        \begin{equation*}
            (\zeta - \zeta_{\mathrm{OPT}}) \cdot \bbE [d_\mathcal{V}(\vavg)] \le \frac1T \cdot (\mathcal{R}_g(T) + \calRup_\lambda(T) + \zeta \cdot \calRup_\theta(T)). 
        \end{equation*}
        If additionally $\zeta$ satisfies $\zeta \ge 2(\zeta_{\mathrm{OPT}} + \frac{\sqrt{D_\Lambda}}{\sqrt{D_\Theta}} + \frac{\kappa_{\mathrm{risk}}}{D_v \sqrt{D_\Theta}})$, it holds that 
        \begin{equation*}
            \bbE [d_\mathcal{V}(\vavg)] \le D_v \sqrt{D_\Theta} \cdot \mathcal{O} (T^{-1/2}) + \mathcal{O} (T^{-\alpha}). 
        \end{equation*}
    \end{proof}
    
    \subsection{A sufficient condition for \Cref{assumption:opt-relax}}
    Herein we provide a sufficient condition which guarantees the existence of the constant $\zeta_{\mathrm{OPT}}$ in \Cref{assumption:opt-relax}.
    \begin{lemma}\label{lemma:opt-relax}
        Suppose that $\mathrm{OPT}$ is finite and that there exists a policy $\pi^\circ$ such that $\mathrm{con}(\pi^\circ) \in \tn{int}(\calV)$. Then, there exists a constant $\zeta'$ such that 
        \begin{equation*}
            \mathrm{OPT} = \sup_{\pi} \{\mathrm{rew}(\pi) + u(\mathrm{con}(\pi)) - \zeta' \cdot d_\mathcal{V}(\mathrm{con}(\pi))\}. 
        \end{equation*}
    \end{lemma}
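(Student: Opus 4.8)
The plan is to prove this as a standard exact-penalty result driven by a Slater-type mixing argument, exploiting the convexity of the policy class under pointwise mixing and the interior feasibility of $\pi^\circ$. Write $f(\pi) := \mathrm{rew}(\pi) + u(\mathrm{con}(\pi))$. The first observation is that $f$ is concave in $\pi$: the maps $\pi \mapsto \mathrm{rew}(\pi)$ and $\pi \mapsto \mathrm{con}(\pi)$ are linear (expectations of linear functions of $\pi(x)$), $u$ is concave, and for any $\lambda \in [0,1]$ the mixed policy $\pi_\lambda(x) := (1-\lambda)\pi(x) + \lambda\pi^\circ(x)$ lands in $\mathcal{S}$ by convexity of $\mathcal{S}$, with $\mathrm{con}(\pi_\lambda) = (1-\lambda)\mathrm{con}(\pi) + \lambda\,\mathrm{con}(\pi^\circ)$. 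The ``$\geq$'' direction of the claimed equality is immediate: restricting the supremum to feasible policies (those with $\mathrm{con}(\pi)\in\mathcal{V}$, hence $d_{\mathcal{V}}(\mathrm{con}(\pi))=0$) already yields $\mathrm{OPT}$. So the entire task is to exhibit a finite constant $\zeta'$ with $f(\pi) - \zeta'\, d_{\mathcal{V}}(\mathrm{con}(\pi)) \le \mathrm{OPT}$ for \emph{every} policy $\pi$.

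First I would fix an arbitrary $\pi$ and set $\epsilon := d_{\mathcal{V}}(\mathrm{con}(\pi))$; the case $\epsilon = 0$ is trivial, so assume $\epsilon > 0$. Using $\mathrm{con}(\pi^\circ) \in \mathrm{int}(\mathcal{V})$, pick $\rho > 0$ so that the $\|\cdot\|$-ball of radius $\rho$ around $\mathrm{con}(\pi^\circ)$ lies in $\mathcal{V}$, and let $\bar v \in \mathcal{V}$ attain $\|\mathrm{con}(\pi) - \bar v\| = \epsilon$ (the projection exists since $\mathcal{V}$ is closed and convex). The crucial step is the choice $\lambda := \frac{\epsilon}{\epsilon+\rho} \in (0,1)$, which is legitimate for \emph{all} $\epsilon > 0$ at once. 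Decomposing $\mathrm{con}(\pi_\lambda) = \big[(1-\lambda)\bar v + \lambda\,\mathrm{con}(\pi^\circ)\big] + (1-\lambda)\big(\mathrm{con}(\pi) - \bar v\big)$, I would show the bracketed point admits a $\|\cdot\|$-ball of radius $\lambda\rho$ inside $\mathcal{V}$ (any displacement $w$ with $\|w\|\le\lambda\rho$ keeps it a convex combination of $\bar v\in\mathcal{V}$ and $\mathrm{con}(\pi^\circ)+w/\lambda\in\mathcal{V}$), while the remaining perturbation has norm $(1-\lambda)\epsilon = \lambda\rho$ exactly. Hence $\mathrm{con}(\pi_\lambda) \in \mathcal{V}$, so $\pi_\lambda$ is feasible and $f(\pi_\lambda) \le \mathrm{OPT}$.

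Concavity then gives $(1-\lambda)f(\pi) + \lambda f(\pi^\circ) \le f(\pi_\lambda) \le \mathrm{OPT}$, and solving for $f(\pi)$ with $1-\lambda = \frac{\rho}{\epsilon+\rho}$ produces precisely $f(\pi) \le \mathrm{OPT} + \frac{\mathrm{OPT} - f(\pi^\circ)}{\rho}\,\epsilon$. Taking $\zeta' := \frac{\mathrm{OPT} - f(\pi^\circ)}{\rho}$, which is finite and nonnegative since $\pi^\circ$ is feasible (so $f(\pi^\circ)\le\mathrm{OPT}$) and $\mathrm{OPT}$ is finite, we obtain $f(\pi) - \zeta'\,d_{\mathcal{V}}(\mathrm{con}(\pi)) \le \mathrm{OPT}$ for all $\pi$; supremizing yields the ``$\le$'' direction and hence the equality. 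As a corollary, reading this bound over all $\pi$ with $d_{\mathcal{V}}(\mathrm{con}(\pi)) \le \epsilon$ gives $\mathrm{OPT}^\epsilon \le \mathrm{OPT} + \zeta'\epsilon$, which is exactly \Cref{assumption:opt-relax} with $\zeta_{\mathrm{OPT}} = \zeta'$.

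The hard part will be making the interior ``room to spare'' quantitative in the generic norm and, above all, selecting $\lambda$ so that feasibility holds uniformly over all $\epsilon > 0$. The balance $\lambda = \epsilon/(\epsilon+\rho)$ is what avoids capping $\lambda$ at $1$ for large $\epsilon$; a naive choice such as $\lambda = \epsilon/\rho$ only handles small $\epsilon$ and would then force an auxiliary argument (e.g.\ concavity and monotonicity of the perturbation value function $\epsilon \mapsto \mathrm{OPT}^\epsilon$) to extend the slope bound to all $\epsilon$. I would also verify that $f(\pi^\circ)$ is genuinely finite (finite consumption vector and integrable reward for the specific feasible policy $\pi^\circ$) so that $\zeta'$ is a bona fide constant.
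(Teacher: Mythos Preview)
Your proof is correct and follows essentially the same Slater-type mixing argument as the paper: pick an interior anchor, choose the convex combination weight $\lambda = \epsilon/(\epsilon+\rho)$ (the paper writes this as $\kappa/(\kappa+1)$ with $\kappa = d_{\mathcal V}(v)/\rho$), verify feasibility of the mixed policy via the same ``displacement lies in the scaled interior ball'' step, and then rearrange the concavity inequality. The one cosmetic difference is that the paper anchors at an auxiliary near-optimal policy $\pi^\dagger$ drawn from a shrunken set $\mathcal V^{-\epsilon}$, which lets it express the penalty constant as $\zeta' = 1 + (\mathrm{OPT} - \mathrm{OPT}^{-\epsilon})/\epsilon$, whereas you anchor directly at $\pi^\circ$ and obtain the simpler (but possibly larger) $\zeta' = (\mathrm{OPT} - f(\pi^\circ))/\rho$; since only existence of some finite $\zeta'$ is claimed, your streamlined version suffices.
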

    \begin{proof}
        Let $u^\circ \gets \mathrm{con}(\pi^\circ)$. Since $u^\circ \in \tn{int}(\calV)$, there exists a constant $\epsilon > 0$ such that for all $u'$ satisfying $ \vvta{u' - u^\circ} \le \epsilon$, it holds that $u' \in \calV$. 
        Let $\partial \calV$ denote the boundary of the set $\calV$, and define $\calV^{-\epsilon} \gets \{v \in \calV: d_{\partial \calV}(v) \ge \epsilon\}$. 
        Consider 
        \begin{equation*}
            \mathrm{OPT}^{-\epsilon} = \sup_{\pi: \mathrm{con}(\pi) \in \calV^{-\epsilon}} \{\mathrm{rew}(\pi) + u(\mathrm{con}(\pi))\}. 
        \end{equation*}
        Since $\mathrm{con}(\pi^\circ) \in \calV^{-\epsilon}$, we know that $\mathrm{OPT}^{-\epsilon}$ is real-valued. 
        Pick a policy $\pi^\dagger$ such that $\mathrm{con}(\pi^\dagger) \in \calV^{-\epsilon}$ and $\mathrm{rew}(\pi^\dagger) + u(\mathrm{con}(\pi^\dagger)) \ge \mathrm{OPT}^{-\epsilon} - \epsilon$. 
        Let $v^\dagger \gets \mathrm{con}(\pi^\dagger)$. 
        Now for any $\pi \not\in \calV$, let $v \gets \mathrm{con}(\pi)$. Pick $\tilde{v} \in \partial \calV$ such that $\vvta{v - \tilde{v}} = d_{\calV}(v)$ and let $\kappa = d_{\calV}(v) / d_{\partial \calV}(v^\dagger) \le d_{\calV}(v) / \epsilon$. 
        Let $\tilde{v}^\dagger \gets v^\dagger + (v - \tilde{v}) / \kappa$, since $\vvta{\tilde{v}^\dagger - v^\dagger} = d_{\partial \calV}(v^\dagger)$, it holds that $\tilde{v}^\dagger \in \calV$. 
        Also, let $v' \gets \frac{1}{\kappa + 1} \cdot (\kappa \cdot \tilde{u}^\dagger + \tilde{v}) \in \calV$, and it holds that $v' = \frac{1}{\kappa + 1} \cdot (\kappa \cdot u^\dagger + u)$.  
        
        Now define a new policy $\pi'$ by $\pi'(x) := \frac{1}{\kappa + 1} \cdot (\kappa \cdot \pi^\dagger(x) + \pi(x))$ for any $x \in \calX$. 
        The policy $\pi'$ is well-defined since $\mathcal{S}$ is convex and therefore $\pi'(x) \in \mathcal{S}$ for any $x \in \calX$. 
        Also, the policy $\pi'$ is feasible since $\mathrm{con}(\pi') = v' \in \calV$, and therefore, it holds that $\mathrm{OPT} \ge \mathrm{rew}(\pi') + u(\mathrm{con})(\pi')$. 
        On the other hand, since $u(\cdot)$ is concave, it holds that 
        \begin{equation*}
            \mathrm{rew}(\pi') + u(\mathrm{con}(\pi')) \ge \frac{1}{\kappa + 1} \cdot (\kappa \cdot [\mathrm{rew}(\pi^\dagger) + u(\mathrm{con}(\pi^\dagger))] + [\mathrm{rew}(\pi) + u(\mathrm{con}(\pi))]). 
        \end{equation*}
        Therefore, it holds that 
        \begin{align*}
            \mathrm{rew}(\pi) + u(\mathrm{con}(\pi)) - \mathrm{OPT} & \le \kappa \cdot (\mathrm{OPT} - (\mathrm{rew}(\pi^\dagger) + u(\mathrm{con}(\pi^\dagger))) \\ 
            & = \frac{d_{\calV}(\mathrm{con}(\pi))}{d_{\partial \calV}(\mathrm{con}(\pi^\circ))} \cdot (\mathrm{OPT} - (\mathrm{OPT}^{-\epsilon} - \epsilon)) \\ 
            & \le d_{\calV}(\mathrm{con}(\pi)) \cdot \lr{\frac{\mathrm{OPT} - \mathrm{OPT}^{-\epsilon}}{\epsilon} + 1}. 
        \end{align*}
        By setting $\zeta' \gets 1 + (\mathrm{OPT} - \mathrm{OPT}^{-\epsilon}) / \epsilon$, for any $\pi \not\in \calV$, it holds that 
        \begin{equation*}
            \mathrm{OPT} \ge \mathrm{rew}(\pi) + u(\mathrm{con}(\pi)) - \zeta' \cdot d_{\calV}(\mathrm{con}(\pi)), 
        \end{equation*}
        and we conclude the proof. 
    \end{proof}
    Now given the results in \Cref{lemma:opt-relax}, for any $\epsilon > 0$, it holds that 
    \begin{align*}
        \mathrm{OPT}^\epsilon &= \sup_{\pi: d_\calV(\mathrm{con}(\pi)) \le \epsilon} \{\mathrm{rew}(\pi) + u(\mathrm{con(\pi)})\} \\ 
        & \le \sup_{\pi} \, \{\mathrm{rew}(\pi) + u(\mathrm{con(\pi)}) - \zeta' \cdot [d_\calV(\mathrm{con}(\pi)) - \epsilon])\} \\ 
        & = \sup_{\pi} \, \{\mathrm{rew}(\pi) + u(\mathrm{con(\pi)}) - \zeta' \cdot d_\calV(\mathrm{con}(\pi)))\} + \zeta' \cdot \epsilon \\ 
        & \le \mathrm{OPT} + \zeta' \cdot \epsilon, 
    \end{align*}
    and therefore we show the existence of $\zeta_{\mathrm{OPT}}$.

\section{Trade-off between regret and computation cost}\label{appendix:beta-efficient}
    In each iteration of \Cref{alg:practical}, one essential step is to update the prediction model based on all the previous observations and the current dual variables. 
    Although it may be possible to perform this update efficiently -- for example, one could use a warm-starting procedure depending on the structure of the hypothesis class -- the decision-maker may still not want to update the prediction model at each iteration, especially if decisions need to be made quickly. 
    To address this issue, we develop a more computationally efficient version of our algorithm, which only updates the prediction model at a sublinear rate, and is formally described as follows.
    \begin{definition}
        For any constant $\beta \ge 1$, the $\beta$-efficient version of \Cref{alg:practical} is an algorithm which is same as \Cref{alg:practical} but only updates the dual variables and prediction model at iteration $t = \lfloor k^\beta \rfloor$ for all positive integer $k$. 
    \end{definition}
    From the prediction model update frequency of a $\beta$-efficient version of \Cref{alg:practical}, we notice that a total number of $T^{1/\beta}$ prediction model updates is required. 
    We provide the regret analysis of a $\beta$-efficient version of \Cref{alg:practical} in \Cref{thm:regret-beta}. 
    \begin{theorem}\label{thm:regret-beta}
        In the hard constraints case, suppose that the assumptions of \Cref{thm:regret-hard} hold and consider the $\beta$-efficient version of \Cref{alg:practical} for some constant $\beta \in (0, 1]$. Then we have the following guarantee: 
        \begin{equation*}
            \mathrm{OPT} - \bbE [\ravg^\tau + u(\vavg^\tau)] \le \kappa_{\mathrm{MD}} \cdot \mathcal{O} (T^{-1/2\beta}) + \kappa_{\mathrm{risk}} \cdot \mathcal{O} (T^{- \alpha}). 
        \end{equation*}
        In the soft constraints case, suppose that the assumptions of \Cref{thm:regret-soft} hold and consider the $\beta$-efficient version of \Cref{alg:practical} for some constant $\beta \in (0, 1]$. Then we have the following guarantees:
        \begin{equation*}
            \mathrm{OPT} - \bbE [\ravg + u(\vavg)] \le \kappa_{\mathrm{MD}} \cdot \mathcal{O} (T^{-1/2\beta}) + \kappa_{\mathrm{risk}} \cdot \mathcal{O} (T^{- \alpha}), 
        \end{equation*}
        and if additionally the budget penalty parameter $\zeta$ satisfies $\zeta \ge 2(\zeta_{\mathrm{OPT}} + \frac{\sqrt{D_\Lambda}}{\sqrt{D_\Theta}} + \frac{\kappa_{\mathrm{risk}}}{D_v \sqrt{D_\Theta}})$, it holds that 
        \begin{equation*}
            \bbE [d_\mathcal{V}(\vavg)] \le D_v \sqrt{D_\Theta} \cdot \mathcal{O} (T^{-1/2\beta}) + \mathcal{O} (T^{-\alpha}). 
        \end{equation*}
    \end{theorem}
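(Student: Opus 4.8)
The plan is to reuse the two-part regret decomposition from the proofs of \Cref{thm:regret-hard} and \Cref{thm:regret-soft} -- separating the total regret into a learning part and a dual-suboptimality part -- but working with block-adapted versions of \Cref{lemma:regret-prediction}, \Cref{lemma:regret-hard}, and \Cref{lemma:regret-soft}. First I would fix block notation: let $t_k := \lfloor k^\beta \rfloor$ be the $k$-th update time, let $N := \lfloor T^{1/\beta} \rfloor$ be the number of updates before $T$, and let $\ell_k := t_{k+1} - t_k$ be the length of block $k$, so that $\lambda_t, \theta_t$ and $g_t$ are frozen at their time-$t_k$ values for all $t \in \{t_k, \ldots, t_{k+1}-1\}$ (the final block is truncated at $T$). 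A standard mean-value estimate gives $\ell_k = \mathcal{O}(\beta k^{\beta-1})$ and hence $\max_k \ell_k = \mathcal{O}(T^{1 - 1/\beta})$.

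For the dual-suboptimality part, the key point is that, since $\lambda_t$ and $\theta_t$ are constant within each block, the per-step mirror-descent regret sums \emph{exactly} to the regret of online mirror descent run over the $N$ block-aggregated losses $\Xi_k(\lambda) := \sum_{t=t_k}^{t_{k+1}-1}\xi_t(\lambda)$ and $\Phi_k(\theta) := \sum_{t=t_k}^{t_{k+1}-1}\phi_t(\theta)$. Each aggregated subgradient satisfies $\|\nabla \Xi_k\| \le \ell_k G_\Lambda \le (\max_k \ell_k)\,G_\Lambda = \mathcal{O}(T^{1-1/\beta})\cdot G_\Lambda$, and likewise for $\Phi_k$; applying \Cref{lemma:regret-mirror} with this effective subgradient bound, the rescaled step size, and $N = \mathcal{O}(T^{1/\beta})$ rounds yields $\sum_{t=1}^T (\xi_t(\lambda_t) - \xi_t(\lambda)) = \mathcal{O}\big(G_\Lambda \sqrt{D_\Lambda}\,T^{1 - 1/(2\beta)}\big)$, and analogously for $\theta$. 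Thus the block analogues of \Cref{lemma:regret-hard} and \Cref{lemma:regret-soft} hold verbatim with $\calRup_\lambda(T)$ and $\calRup_\theta(T)$ replaced by quantities of order $T^{1 - 1/(2\beta)}$; the Fenchel--Young and separating-hyperplane steps (using \Cref{lemma:property-bv}) carry over unchanged, as they only manipulate the frozen per-block iterates.

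For the learning part, I would rerun the argument of \Cref{lemma:regret-prediction} block by block. Within block $k$ the model was trained on $t_k - 1$ samples, so -- crucially invoking that \Cref{assumption:spo-risk-bounds} holds \emph{uniformly} over $\omega$, which is exactly what licenses using a stale dual-parameterized loss within the block -- each step in block $k$ incurs expected decision error at most $\kappa_{\mathrm{risk}}\cdot(t_k - 1)^{-\alpha}$. Summing gives $\mathcal{R}_g(T) \le \kappa_{\mathrm{risk}}\sum_{k=1}^N \ell_k (t_k - 1)^{-\alpha}$ (with the $k=1$ block absorbed into an $\mathcal{O}(1)$ term), and substituting $\ell_k = \mathcal{O}(\beta k^{\beta-1})$, $t_k = \Theta(k^\beta)$ reduces this to $\kappa_{\mathrm{risk}}\cdot\mathcal{O}\big(\sum_{k=1}^N k^{\beta(1-\alpha)-1}\big) = \kappa_{\mathrm{risk}}\cdot\mathcal{O}(N^{\beta(1-\alpha)}) = \kappa_{\mathrm{risk}}\cdot\mathcal{O}(T^{1-\alpha})$, i.e., the learning rate is \emph{unchanged} from the fully-updating algorithm. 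Intuitively, staleness within a block costs only a constant factor because $t_{k+1}/t_k \to 1$.

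Finally I would combine the two parts exactly as in the proofs of \Cref{thm:regret-hard} and \Cref{thm:regret-soft}: plugging $\calRup_\lambda(T) = \mathcal{O}(D_v\sqrt{D_\Lambda}\,T^{1-1/(2\beta)})$, $\calRup_\theta(T) = \mathcal{O}(D_v\sqrt{D_\Theta}\,T^{1-1/(2\beta)})$, and $\mathcal{R}_g(T) = \kappa_{\mathrm{risk}}\cdot\mathcal{O}(T^{1-\alpha})$ into the same combination inequalities, dividing by $T$, and using $\zeta \ge \mathrm{OPT}/B_\calV$ (hard case) gives the stated bound; for the soft case the extra feasibility step invokes \Cref{assumption:opt-relax} and the stated lower bound on $\zeta$ identically, producing $\bbE[d_\calV(\vavg)] \le D_v\sqrt{D_\Theta}\,\mathcal{O}(T^{-1/(2\beta)}) + \mathcal{O}(T^{-\alpha})$. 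The main obstacle is the bookkeeping in the dual part: confirming that mirror descent on the aggregated block losses -- whose subgradients grow with the non-uniform block lengths -- genuinely upper-bounds the per-step regret, and verifying that the exponents combine as $T^{1-1/\beta}\cdot T^{1/(2\beta)} = T^{1-1/(2\beta)}$ rather than accidentally incurring a worse power from the $\max_k \ell_k$ factor. Landing the two power-sum estimates on precisely $T^{1-1/(2\beta)}$ and $T^{1-\alpha}$ is where the care is needed.
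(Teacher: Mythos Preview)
Your proposal is correct and follows essentially the same route as the paper: both arguments redo the dual part by viewing the $\beta$-efficient algorithm as online mirror descent on the block-aggregated losses and then plug the resulting $\mathcal{O}(T^{1-1/(2\beta)})$ mirror-descent regret into the combination inequalities from the proofs of \Cref{thm:regret-hard} and \Cref{thm:regret-soft}. The only cosmetic difference is that the paper keeps the per-block gradient bound $\ell_k G_\Lambda$ and sums $\sum_k \ell_k^2 = \mathcal{O}(T^{2-1/\beta})$ before optimizing the step size, whereas you replace each $\ell_k$ by $\max_k \ell_k = \mathcal{O}(T^{1-1/\beta})$ and invoke \Cref{lemma:regret-mirror} with $N=\mathcal{O}(T^{1/\beta})$ rounds; both computations land on the same exponent. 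Your treatment of the learning part is in fact more explicit than the paper's --- the paper simply asserts ``using the proof in \Cref{thm:regret-hard} and \Cref{thm:regret-soft} again'' without separately verifying that $\mathcal{R}_g(T)$ stays $\mathcal{O}(T^{1-\alpha})$ under block updates, while you correctly spell out the $\sum_k \ell_k (t_k-1)^{-\alpha}$ sum and its evaluation.
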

    \begin{proof}
        When the updating sequence is $t = t_1, \dots, t_K$, the regret from online mirror descent can be bounded by
        \begin{equation*}
            \sum_{t=1}^T (\xi_t(\lambda) - \xi_t(\lambda_t)) \le \frac{D_\Lambda}{2 \eta_\lambda} + \sum_{k=1}^K \frac{\eta_\lambda}{2} G_\lambda^2 (t_k-t_{k-1})^2. 
        \end{equation*}
        Also, it holds that 
        \begin{equation*}
            \sum_{k=1}^K (t_k - t_{k-1})^2 = \sum_{k=1}^K (\beta k^{\beta - 1})^2 = \frac{\beta^2}{2 \beta - 1} \cdot K^{2 \beta - 1} = \frac{\beta^2}{2 \beta - 1} \cdot T^{2 - 1 / \beta}. 
        \end{equation*}
        and therefore by setting $\eta_\lambda \gets \frac{\sqrt{D_\Lambda}}{G_\Lambda T^{1 - 1/2\beta}}$, we have
        \begin{equation*}
            \mathcal{R}_\lambda(T) = \sum_{t=1}^T (\xi_t(\lambda_t) - \xi_t(\lambda)) \le G_\Lambda \sqrt{D_\Lambda} \cdot \mathcal{O}(T^{1 - 1/2\beta}). 
        \end{equation*}
        For the same reason we also have 
        \begin{equation*}
            \mathcal{R}_\theta(T) = \sum_{t=1}^T (\phi_t(\theta_t) - \phi_t(\theta)) \le G_\Theta \sqrt{D_\Theta} \cdot \mathcal{O}(T^{1 - 1/2\beta}). 
        \end{equation*}
        Now using the proof in \Cref{thm:regret-hard} and \Cref{thm:regret-soft} again we can get the results in \Cref{thm:regret-beta}. 
    \end{proof}
    From the regret analysis, we see that the idea of $\beta$-efficient version of our algorithm is beneficial when the learning of the prediction model has a slower rate, \ie when $\alpha < \frac12$. 
    In this case, we can set $\beta \gets 1/2\alpha$, and the $\beta$-version of \Cref{alg:practical} will have a same regret order as the original algorithm, while maintaining a sublinear total number of prediction model updates.

\section{Experimental details}\label{appendix:experiment}

    In this section, we discuss the details in our numerical experiments. We first provide the results from another experiments on longest path instances, and then discuss issues concerning computational resources and detailed data generation processes. 
    
    \subsection{Additional experiments}
    In this section, we consider a longest path problem on a $4 \times 4$ directed grid network with edges pointing north and east, and the goal is to go from the southwest corner to the northeast corner while maximizing the rewards collected along each edge.
    In this case, the feasible region $\mathcal{S}$ can be modeled as the convex hull of all possible routes. We assume there is no learning in the consumption matrix, \ie the consumption is just the decision itself, namely $V_t = I_d$, i.e., $v_t = w_t$. Also, we would like to not utilize any edge too frequently and model this idea by setting the resource consumption feasible set as $\mathcal{V} = \{v: v \le 0.6 \cdot e\}$, which is a soft constraint, and letting the utility function be $u(v) = \sum_i v_i (1 - v_i)$. 
    The data generation process is similar as the one in the multi-dimensional knapsack instances, and we set the total number of arrivals to 1000 in this experiment. 
    \Cref{fig:shortest_path_deg} displays the empirical performance of each method. We observe that the SPO+ loss which accounts for both dual variables and the decision feasible region $\mathcal{S}$ dominates all cases, and it is more beneficial when the polynomial degree is higher. 
    \begin{figure}
        \centering
        \includegraphics[width=\textwidth]{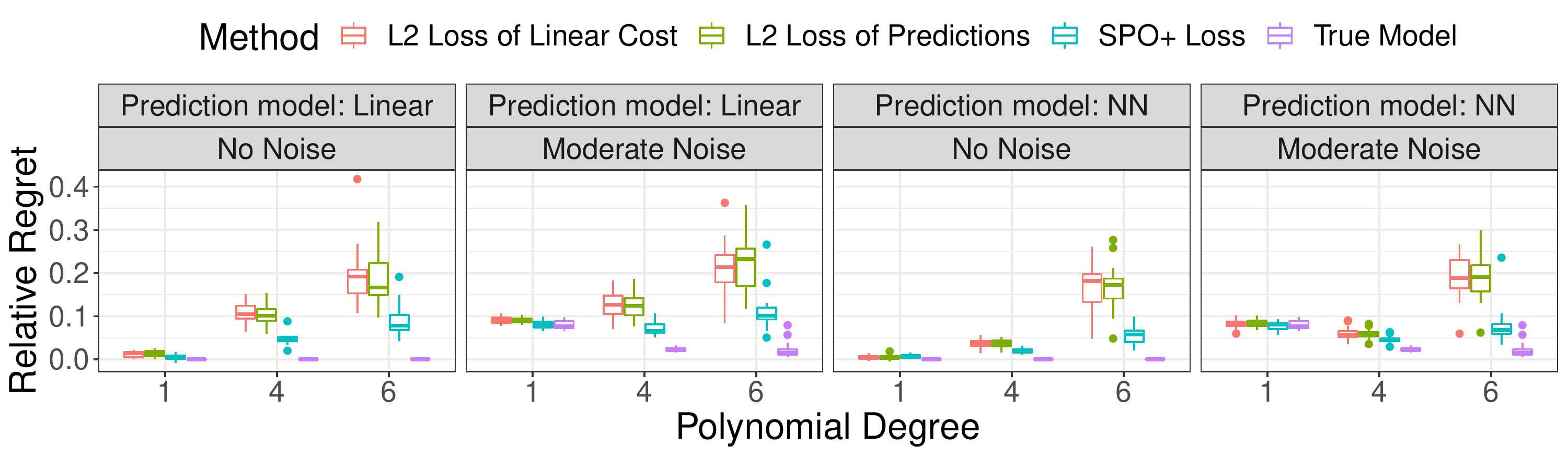}
        \includegraphics[width=\textwidth]{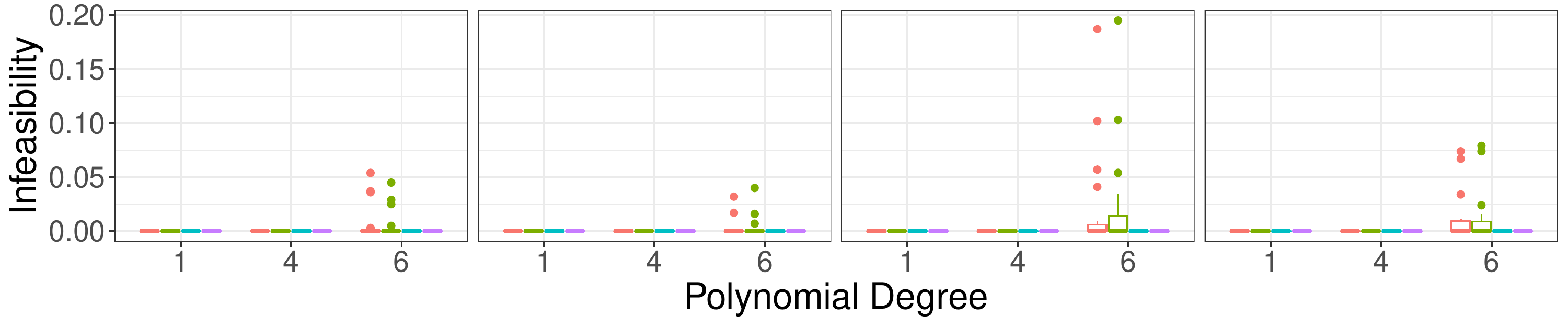}
        \caption{Relative regret and infeasibility for different loss functions on shortest path instances.}
        \label{fig:shortest_path_deg}
    \end{figure}
    
    \subsection{Experiment settings and data generation processes}
    For all loss functions, we use the Adam method of \cite{kingma2015adam} to train the weight matrices and bias coefficients in the prediction models, and we update the dual variables and prediction models every 10 iterations. For each instance, \eg each value of the total time horizon and the polynomial degree, we run $40$ independent trials on one core of Intel Xeon Skylake 6230 @ 2.1 GHz.
    
    \paragraph{Multi-dimensional knapsack instances.} 
    Let us first describe the detailed data generation process in the multi-dimensional knapsack instances. In this experiment, we set the dimension of the feature vector $p = 5$, the dimension of decision vector $d = 10$, and the dimension of the resource vector $m = 3$. We first generate the weight matrix $W \in \bbR^{d(m+1) \times p}$, whereby each entry of $W$ is a Bernoulli random variable with the probability $\bbP(B_{jk} = 1) = \frac12$. We then generate the arrivals $\{(x_i, r_i, V_i)\}_{i=1}^p$ independently by the following procedure: 
    \begin{enumerate}
        \item Generate the feature vector $x$ from a standard multivariate normal distribution, namely $x \sim \mathcal{N}(0, I_p)$. 
        \item Generate the vectorization of the reward vector $r$ and the resource consumption matrix $V$ according to 
        \begin{equation*}
            \tn{vec}(r, V)_j \gets \lrr{1 + \lr{1 + \frac{W_j^T x}{\sqrt{p}}}^{\mathrm{deg}}} \epsilon_j, 
        \end{equation*} 
        for $j = 1, \dots, d(m+1)$, where $W_j$ is the $j$-th row of matrix $W$. Here $\mathrm{deg}$ is the fixed degree parameter and $\epsilon_j$, the multiplicative noise term, is a random variable which independently generated from the uniform distribution $[1 - \bar{\epsilon}, 1 + \bar{\epsilon}]$ for a fixed noise half width $\bar{\epsilon} \ge 0$. 
        In particular, $\bar{\epsilon}$ is set to $0$ for ``no noise'' instances and $0.5$ for ``moderate noise'' instances. 
    \end{enumerate}
    
    \paragraph{Longest path instances.}
    Let us finally describe the detailed data generation process in the longest path instances. In this experiment, we set the dimension of the feature vector $p = 5$. Also, since the graph is a $4 \times 4$ grid, the dimension of both decision and resource consumption vector will be $d = m = 24$, which is the number of edges in the graph. Since the resource consumption matrix is always the identity matrix, we only need to generate the reward vector based on the feature. Therefore, the weight matrix is $W \in \bbR^{d \times p}$. The remaining part is the same as the data generation process in the multi-dimensional knapsack instances. 

\end{document}